\documentclass{ecai}
\usepackage{graphicx}
\usepackage{latexsym}
\usepackage{dirtytalk}
\usepackage{amsmath}
\usepackage{amsthm}
\usepackage[switch]{lineno}
\usepackage{dingbat}
\usepackage{algorithm}
\usepackage{algorithmic}
\usepackage{booktabs}
\usepackage{enumitem}
\usepackage{enumitem}

\newtheorem{theorem}{Theorem}
\newtheorem{lemma}{Lemma}

\ecaisubmission   % inserts page numbers. Use only for submission of paper.
                  % Do NOT use for camera-ready version of paper.

\begin{document}

\begin{frontmatter}

\title{Sample-Level Weighting for Multi-Task Learning with Auxiliary Tasks}
%%%%%%%%%%
\author[A]{\fnms{Emilie}~\snm{Grégoire}}%\thanks{Corresponding Author. Email: somename@university.edu.}}
\author[B]{\fnms{Hafeez}~\snm{Chaudhary}}
\author[A]{\fnms{Sam}~\snm{Verboven}} % use of \orcid{} is optional

\address[A]{Vrije Universiteit Brussel}
\address[B]{Royal Military Academy}
%%%%
\begin{abstract}
Multi-task learning (MTL) can improve the generalization performance of neural networks by sharing representations with related tasks. Nonetheless, MTL can also degrade performance through harmful interference between tasks. Recent work has pursued task-specific loss weighting as a solution for this interference. However, existing algorithms treat tasks as atomic, lacking the ability to explicitly separate harmful and helpful signals beyond the task level.
To this end, we propose SLGrad, a sample-level weighting algorithm for multi-task learning with auxiliary tasks. Through sample-specific task weights, SLGrad reshapes the task distributions during training to eliminate harmful auxiliary signals and augment useful task signals. Substantial generalization performance gains are observed on (semi-) synthetic datasets and common supervised multi-task problems.
\end{abstract}
\end{frontmatter}

\section{Introduction}
Joint optimization with auxiliary tasks can lead to a better generalizable solution for the main task of interest while requiring less data. Recently, multi-task learning (MTL) has seen various successful applications in Deep Learning (DL), e.g., in computer vision \cite{c7,c8}, reinforcement learning \cite{c9}, and recommender systems \cite{c11}. \\ \indent Nonetheless, the optimization of MTL networks remains challenging in practice. During training, negative interference between tasks can prevent the network from reaching a good optimum. Negative inter-task interference mainly occurs for two reasons: (i) task updates providing harmful gradient directions or (ii) gradient norms being imbalanced \cite{c16,c17,c14,c15}. Existing approaches prevent negative gradient interference \textit{at the task level} by finding appropriate task weights in the multi-task loss function \cite{C18,c27} or by aligning the different task gradients \cite{c13,c39}. 
\begin{figure}[h!]
\centering
\includegraphics[width=\columnwidth]{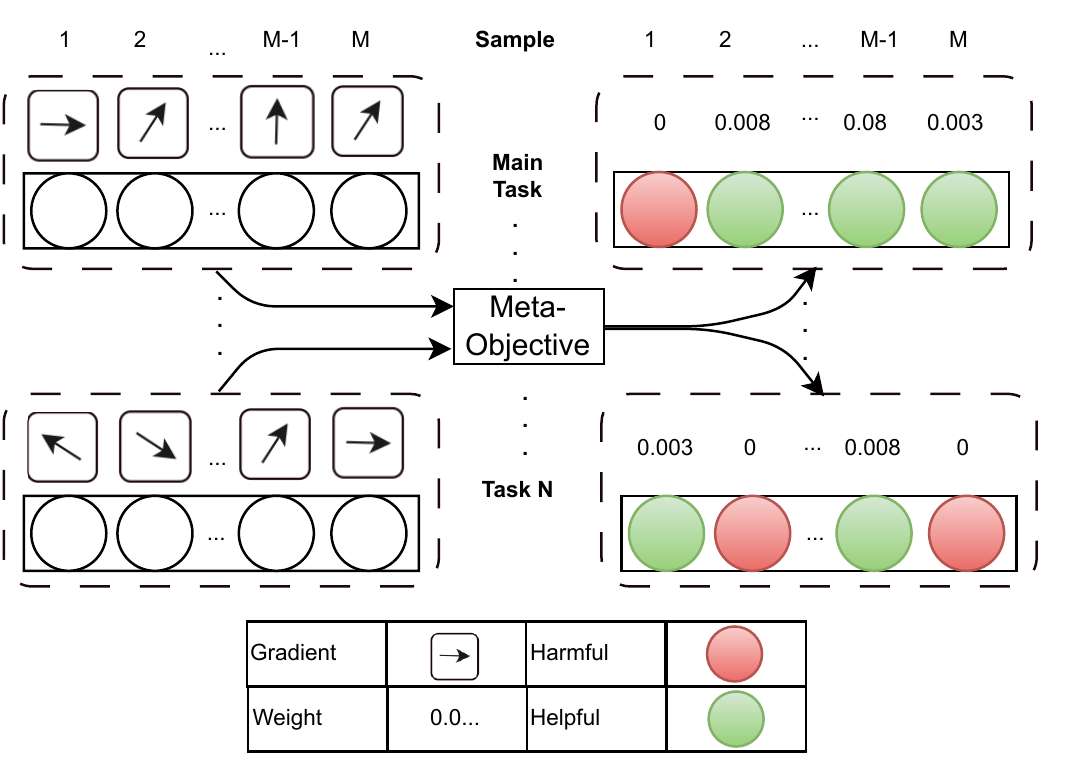}
    \caption{Pictorial representation of the SLGrad task-sample weighting algorithm. SLGrad combines the ability to focus on helpful samples with the ability to prioritize helpful auxiliary tasks for learning a task of interest.}
    \label{drawfig}
\end{figure} 
\begin{table*}[h!]
\begin{center}
{\caption{SLGrad innovations as compared to baseline dynamic task weighting algorithms. The checkmarks indicate which algorithms contain specific innovations discussed in the literature review.}\label{innovation}}
\begin{tabular}{lcccccc}
\hline
\rule{0pt}{12pt}
\rule{0pt}{12pt}
Algorithm&Auxiliary Learning&Highly-Dynamic &Sample-Level&Meta-Objective Learning&Look-Ahead&Paper \\
\\
\hline
\\[-6pt]
OL-AUX &\checkmark&&&&\checkmark& \cite{c9} \\
PCGrad &  & \checkmark &  &   & & \cite{c13}\\
CAGrad &  & \checkmark  &  &  & & \cite{c30}\\
CosSim & \checkmark & \checkmark&        &  & & \cite{c15}\\
GradNorm & & \checkmark& & & &\cite{c2}\\
Random & &\checkmark & &  & & \cite{c38}\\
\textbf{SLGrad} & \checkmark & \checkmark & \checkmark & \checkmark  & \checkmark & Ours  \\
\hline
\\[-6pt]
\end{tabular}
\end{center}
\end{table*}
At the same time, it is widely recognized in machine learning literature that some samples are more useful than others, with some even harming training, e.g., noisy data \cite{c32,c33,c35,c36}.\\ \indent In this paper, we abandon the widespread assertion of MTL literature of a task as an atomic unit. We disentangle the composite task signal into the sum of its parts on a sample-level. Instead of focusing on task-level weighting, this paper adopts a novel unified perspective that jointly optimizes task- and sample-level weights.  \\ \indent To fill the corresponding research gap, we propose SLGrad, a sample-level dynamic task-weighting algorithm for learning with auxiliary tasks. Different from existing task weighting algorithms, SLGrad evaluates the usefulness of samples from each task distribution with respect to the generalization performance of the main task. As such, the proposed algorithm is able to systematically avoid negative interference between different task distributions while providing meaningful sample weighting. SLGrad thus joins the benefits of an auxiliary learning approach with those of sample weighting. A pictorial representation of SLGrad is provided in Figure~\ref{drawfig}. Experiments on synthetic, semi-synthetic, and real data show that SLGrad is flexible to adjust to the variability in task and sample signals, providing state-of-the-art performance. 

The contributions of this paper are summarized below:
\begin{itemize}[topsep=2pt]
   \item We bridge the existing gap between sample weighting in single-task literature and dynamic task weighting in multi-task learning.
  \item We propose a new dynamic task weighting algorithm for MTL with auxiliary tasks: SLGrad.
   \item SLGrad's sample and task weight distributions are exhaustively studied.
    \item Extensive benchmark experiments on (semi)-synthetic and real-world datasets are performed.
\end{itemize}
\section{Related Literature}
In MTL literature, one either cares about optimizing all the tasks or only a single task of interest. This paper is positioned in the latter setting, which we will refer to as auxiliary learning. Here, auxiliary tasks only matter insofar as they help in improving learning the task of interest, denoted as the main task. This work draws from recent developments in dynamic task weighting, sample (i.e., importance) weighting and meta-learning literature. As such, we review these relevant strands of the literature and summarize them in Table \ref{innovation}. 
\subsection{Multi-task Learning with Auxiliary Tasks}
Auxiliary MTL has been successfully applied to many complex, high-dimensional domains, e.g., natural language processing \cite{c20}, computer vision \cite{c21,c23}, and reinforcement learning \cite{c9}. A key challenge for MTL, and by extension auxiliary learning setting is finding the optimal balance between each task in training \cite{c3,C18}. 

\subsection{Dynamic Task Weighting}
To address the challenge of balancing task contributions, dynamic task weighting algorithms have been proposed. In deep MTL, a neural network is typically trained using a linearly weighted combination of individual task losses, i.e., the composite loss. Setting a single uniform weight for each task loss in the composite loss throughout the training of the network has been shown to lead to poor performance \cite{c6}. Furthermore, a particular task may help learn a useful base representation initially but interfere with fine-tuning later on. Consequently, task weights are often designed to be dynamic, i.e., change during training, or highly dynamic, i.e., set independently of the previous update \cite{hyda, c28}. Various highly dynamic task weighting strategies exist, each adapting the loss weights or the gradients of the different tasks. Strategies include weighting based on inverse validation loss \cite{c28}, geometric loss \cite{c29}, and uncertainty \cite{c27}. For a complete overview, we refer to \cite{c3} and \cite{C18}.
\paragraph{Negative Task Interference.} A task is said to be negatively interfering when its learning signal negatively impacts learning the main task. Reasons could be that the gradient magnitudes are imbalanced \cite{c2,c17,c16} or that the gradient directions of the updates simply do not align with a direction increasing generalization performance \cite{c15}. To avoid task gradients with low gradient magnitudes being overshadowed by tasks with large gradients, gradient magnitudes can be normalized throughout training with respect to one main or several tasks \cite{c2,c17}. Many different approaches to deal with harmful or conflicting gradient directions exist, e.g., dropping contributions with negative cosine similarity with the average direction \cite{c14}, projecting gradients to either the normal plane of the other task \cite{c13} or an optimized vector \cite{c30,c39}. All of these strategies, however, treat tasks as atomic units, with the exception of \cite{robmtl}. An important research question is thus: \say{how to account for differences in task usefulness beyond the task level in MTL?}, as we know from the literature that some samples matter more than others. 

\subsection{Some Samples Matter More than Others}
In single-task learning, it has been well-established that some samples matter more than others for learning well-generalizable solutions \cite{c40}. Some samples can be removed from the dataset at the beginning or during training without any loss of generalization performance \cite{c33}. Others, through data set biases such as noisy or class-imbalanced data, can explicitly harm performance \cite{imp}. Introducing sample-level weighting in the loss function is a common strategy to mitigate these biases.
\paragraph{Sample Weighting.} Sample-level weighting for deep learning is used across many domains such as causal inference and off-policy reinforcement learning. In causal inference, inverse propensity weighting  \cite{c42} balances the distributions of the control and treatment groups in observational data. Similarly, sample weighting can reduce training set bias caused by class imbalance \cite{c801,c802} or label noise \cite{c800,c35,c809}. Finally, sample weights can take into account prioritization based on domain knowledge e.g., in cost-sensitive learning \cite{c803,c804}.  In effect, across these domains, weighting samples of the original source distribution leads an algorithm to learn from a semi-synthetic target distribution favourable to the task of interest. This mechanic is explicitly exploited in the importance weighting literature \cite{impweight}.

\subsection{Generalization as a Meta-Objective}
Meta-learning \cite{metalearn} uses the learning performance of machine learning approaches to improve the learning of the same or other tasks \cite{MLearning}. Recent work has instantiated hold-out (i.e., validation set) performance on the task of interest as an optimization-based meta-learning objective \cite{c35}. The incorporation of hold-out performance in the outer training loop is commonly used in the field of meta-learning to prioritize solutions that generalize better \cite{metasurv,c810}. 
\section{SLGrad: Sample-Level Weighting for MTL With Auxiliary Tasks}
To unify sample-level weighting with MTL, we propose SLGrad, a sample-level weighting algorithm for auxiliary learning that dynamically assigns higher weights to samples that positively contribute to the generalization performance of the main task, while discarding negatively contributing samples.  
This section is organized as follows: first, the problem setting is defined in subsection \ref{formulation}. Next,  the main components of SLGrad are introduced and motivated in subsection \ref{solutionn}. Third, the algorithm and its implementation details are formalized in subsection \ref{algorithmm}. Finally, the main theoretical properties of the algorithm are presented in subsection \ref{theo}.
\subsection{Problem Formulation \label{formulation}}
The objective is to optimize the generalization performance of a deep neural network on a main task $\mathcal{T}_{m}$, measured by $\mathcal{M}_{m}$, through joint training with a set of auxiliary tasks $\mathcal{T}_{a}$. \\
\indent The standard multi-task loss $\mathcal{L}^{(t)}_{MTL}$ consists of a linear combination of task losses $\mathcal{L}_{i}$, 
\begin{equation}
    \mathcal{L}^{(t)}_{MTL}=\sum_{i=1}^{N_{T}}w_{i}\mathcal{L}_{i}(\theta_{s}^{(t)}, \theta_{i}^{(t)}),\\
    \label{stdmtl}
\end{equation}
where $N_{T}$ denotes to the number of different tasks, $\theta_{s}^{(t)}$ the model parameters of the shared layers, and $\theta_{i}^{(t)}$ the parameters of the task-$i$-specific layers at training step $t$. Finally, the coefficients $w_{i}$ correspond to the task-specific weights. \\
\indent The model parameters $\theta^{(t)}$, both shared and task-specific, are updated based on a mini-batch of data, using standard gradient descent algorithms on $\mathcal{L}^{(t)}_{MTL}$:
\begin{equation}
    \theta^{(t+1)} = \theta^{(t)} -\eta \nabla_{\theta^{(t)} }\mathcal{L}^{(t)}_{MTL}, 
    \label{gradd}
\end{equation}
where $\eta$ is the learning rate. 

This standard setup in multi-task learning only facilitates weighting at the task-level.
\subsection{Proposed Solution: SLGrad \label{solutionn}}
\indent In order to separate harmful from useful signals \textit{beyond the task level}, the standard MTL problem must be extended to facilitate the weighting of individual task-sample combinations. As such, we extend the standard multi-task loss in equation (\ref{stdmtl}) to:
\begin{equation}
    \mathcal{L}^{(t)}_{SLGrad}=\sum_{i=1}^{N_{T}}\sum_{j=1}^{N_{B}}w^{(t)}_{ij}\mathcal{L}_{ij}(y_{ij},\hat{y}_{ij},\theta_{s}^{(t)}, \theta_{i}^{(t)}),
    \label{SLGradloss}
\end{equation}
where $N_{B}$ represents the size of the mini-batch. In this paper, the goal is to find the task-sample-specific weights $w^{(t)}_{ij}$ such that the generalization metric $\mathcal{M}_{m}^{val}$ is minimized. As such, we propose SLGrad, an algorithm that jointly solves the task and sample weighting problems. We thus tackle the MTL weighting problem at a more granular level compared to current solutions. In effect, such a solution jointly tackles the multi-task weighting problem from MTL literature, and the sample weighting problem from general machine learning literature. \\
\indent Specifically, SLGrad dynamically optimizes task-sample-specific importance weights at each step of the training according to a simple and intuitive update rule. 
The sample-level weights $w^{(t)}_{ij}$ are computed as
\begin{equation}
 \Tilde{w}^{(t)}_{ij}=\nabla _{\theta^{(t)}}\mathcal{L}_{ij}^{train} \cdot \nabla_{\theta^{(t)}}\mathcal{M}^{val}_{m},
 \label{cossim} 
\end{equation} 
where $\mathcal{L}_{ij}^{train}$ represents the task-sample specific training loss and $\mathcal{M}^{val}_{m}$ represents the generalization metric.
Next, the sample-level weights are normalized according to 
\begin{equation}
    w^{(t)}_{ij}=\frac{\max(\Tilde{w}_{ij}^{(t)},0) }{\sum_{i=1}^{N_{T}}\sum_{j=1}^{N_{B}}\max(\Tilde{w}^{(t)}_{ij}, 0)}. 
    \label{weightup}
\end{equation}
In the following section, we explain how the SLGrad weighting rule defined by equations (3) to (5) incorporates the following features: 
\begin{itemize}
    \item Prioritization of the main task
    \item Highly dynamic weighting
    \item Sample-level weighting
    \item Generalization as a meta-objective
    \item Look-ahead updating
\end{itemize}
A feature-wise comparison of SLGrad and competing algorithms is depicted in Table \ref{innovation}.
\paragraph{Prioritization of the main task.}
In contrast to the majority of previous dynamic weighting algorithms, which focus on a multi-objective setting, equation (\ref{cossim}) is tuned to the auxiliary learning set-up, explicitly prioritizing learning the main task $m$. As implied by Theorem \ref{taylortheo}, the decrease in the generalization loss of the main task is proportional to the size of the weights $w_{i,j}$ (defined by equation \ref{cossim}), i.e., samples that help to decrease $\mathcal{M}^{val}_{m}$ receive a higher weight. 
\paragraph{Highly dynamic updates.} 
We speak of highly dynamic weight updates when $w_{ij}^{(t+1)}\perp \!\!\! \perp w_{ij}^{(t)}$. As each mini-batch contains different samples, this is a necessary condition for sample-level weighting. As equation (\ref{cossim}) does not depend on the previous weights, the condition naturally holds for SLGrad. Moreover, previous work has argued that highly dynamic updates can better adjust to variability in task usefulness on a mini-batch basis as well \cite{hyda}. To assure full independency between two parameter updates, SLGrad initializes the sample-level weights to uniform weights before performing the first forward and backward passes used to compute the sample-level gradients.  %initialize weights each iteration. + sign van independency erbij
\noindent\paragraph{Sample-level weighting.}
Cosine similarity is commonly used to quantify the similarity between gradient directions \cite{C18,c3}. Equation (\ref{cossim}) reflects the cosine similarity between the gradient of a training sample $(x_{ij},y_{ij})$ and the gradient of the main task generalization metric $\mathcal{M}^{val}_{m}$.
Then, using equation (\ref{weightup}), negative interference between samples and the main task is avoided by assigning zero weight to samples whose gradient is misaligned with the main task gradient ($\Tilde{w}_{ij}<0$). Samples that are well aligned with respect to this objective ($\Tilde{w}_{ij}>0$) are weighted proportionally to their similarity with the gradient direction of $\mathcal{M}^{val}_{m}$. 
The denominator in equation (\ref{weightup}) ensures that the weights are normalized, avoiding fluctuations in the total learning rate \cite{c16}. Furthermore, this normalization factor is model-agnostic, meaning that we could replace equation (\ref{weightup}) with any other gradient normalization rule, e.g., \cite{c2,c17}. In summary, the sample weighting mechanism of SLGrad handles both sources of negative task interference: harmful gradient directions and imbalanced gradient magnitudes.
\paragraph{Generalization as a meta-objective.}
Previous algorithms that focus on main task training loss $\mathcal{L}_{ij}^{train}$ \cite{c13,c30} are prone to overfitting. Equation (\ref{cossim}) uses the gradient of the validation (minimization) metric $\nabla_{\theta^{(t)}}\mathcal{M}^{val}_{m}$ as a meta-objective that provides a proxy for generalization performance. In doing so, we draw from meta-learning literature to incorporate an explicit generalization objective. This approach contrasts with algorithms that use gradient directions computed on the main task training loss as an optimization guide. A second benefit of this approach, which we will validate experimentally, is that if this meta-objective is unbiased, the weighting mechanism will also have a debiasing effect on the learned representation.
\paragraph{Look-ahead update.}
Look-ahead approaches locally scout the loss surface to inform optimization \cite{c9,hyda,TAG}. The update rule in equation (\ref{cossim}) represents a computationally efficient version of a look-ahead update \cite{c15,hyda,TAG}, i.e., information from a throw-away update is used in $\mathcal{M}_{m}^{val}(\theta^{(t+1)})$. Note that the extra update to arrive at $\mathcal{M}_{m}^{val}(\theta^{(t+1)})$ is not explicitly calculated in our algorithm; its difference with $\mathcal{M}_{m}^{val}(\theta^{(t)})$ is approximated using Theorem \ref{taylortheo}. Furthermore, we justify the use of the approximation in the supplemental material (section C1). 
\subsection{The Algorithm \label{algorithmm}}
Based on equations (\ref{SLGradloss}) to (\ref{weightup}), we can formulate the SLGrad algorithm:
\begin{enumerate}
    \item Collect sample-task-specific gradients $\nabla_{\theta^{(t)}}\mathcal{L}_{ij}^{train}$ by performing a forward and backward pass on a mini-batch sampled from the training set. The backpropagated total loss is constructed with uniform task weights $w_{ij}^{(t)}=\frac{1}{N_{B}*N_{T}}$ based on the mini-batch size $N_{B}$ and the number of tasks $N_{T}$.
    \item Determine the gradient of the main task metric $\nabla_{\theta^{(t)}}\mathcal{M}^{val}_{m}$ computed by performing a forward and backward pass on a batch of the validation set.
    \item Compute the cosine similarity $\nabla _{\theta^{(t)}}\mathcal{L}_{ij}^{train} \cdot \nabla_{\theta^{(t)}}\mathcal{M}^{val}_{m}$ between all gradients obtained in steps (1) and (2). 
    \item Reweight all samples of each task by following the SLGrad update rule defined in equations (\ref{cossim}) and (\ref{weightup}).
    \item Use the computed weights $w_{ij}^{(t)}$ from step (4) to perform a final backward pass with the resulting total loss $\mathcal{L}^{(t)}_{SLGrad}$ and update the model parameters through gradient descent step $\theta^{(t)} \to \theta^{(t+1)}$ (equation \ref{gradd}).
\end{enumerate}
The full SLGrad procedure is described in Algorithm \ref{alg}. 
\begin{algorithm}[tb]
\caption{SLGrad Algorithm}
\label{alg}
\vspace{5mm}
\textbf{Input}: Deep MTL Network $\mathcal{N_{MTL}}$, $N_{T}$ tasks, Main task metric $\mathcal{M}_{m}^{val}$, Batch size $N_{B}$,  $N_{T}*N_{B}$ task losses $\mathcal{L}_{ij}$, learning rate $\eta$, Number of training steps $N_{s}$, Initial model parameters $\theta^{(0)}$\\
\begin{algorithmic}[1] %[1] enables line numbers
\STATE \textbf{For $t=0$ to $t=N_{s}-1$:}
\STATE \hspace{3mm} Sample batch from training set $\gets (X,Y)_{train}$
\STATE \hspace{3mm} Sample batch from validation set $\gets (X,Y)_{val}$
\STATE \hspace{3mm} Initialize model with model parameters $\theta^{(t)}$ and uniform sample weights $w_{ij}=(N_{T}*N_{B})^{-1}$ \\ 
\STATE \hspace{3mm} \textbf{Compute gradient validation metric} \\
\hspace{10mm} $\nabla_{\theta^{(t)}}\mathcal{M}^{val}_{m} \gets \mathcal{N}_{MTL}[(X,Y)_{val}, \theta^{(t)}]$ \\
\STATE \hspace{3mm} \textbf{Compute sample-level task gradients}\\
\hspace{6mm} \textbf{For $i \in \{0,..., N_{T}-1\}$:}\\
\hspace{9mm} \textbf{For $j \in \{0,..., N_{B}-1\}$:}\\
\hspace{12mm} $\nabla_{\theta^{(t)}}\mathcal{L}^{(t)}_{ij}\gets\mathcal{N}_{MTL}[(x_{ij},y_{ij})_{train}, \theta^{(t)},w_{ij}]$ \\
\STATE \hspace{12mm} \textbf{Determine cosine similarity}\\
\hspace{12mm} $\Tilde{w}^{(t)}_{ij}\gets\nabla_{\theta^{(t)}}\mathcal{M}^{val}_{m}\cdot \nabla _{\theta^{(t)}}\mathcal{L}_{ij}^{(t)}$
\STATE \hspace{3mm}\textbf{Clamp and normalize all weights}\\
\hspace{6mm} $w^{(t)}_{ij}=\frac{\max(\Tilde{w}^{(t)}_{ij}, 0) }{\sum_{i}\sum_{j}\max(\Tilde{w}^{(t)}_{ij},0)}$
\STATE \hspace{3mm} \textbf{Update model parameters with new weights} \\
\hspace{6mm} $\theta^{(t+1)} \gets \theta^{(t)} - \eta \nabla_{\theta^{(t)}}\mathcal{L}_{SLGrad}$
\STATE \textbf{End For}
\vspace{5mm}
\end{algorithmic}
\end{algorithm}
\subsection{Theoretical Analysis \label{theo}}
In this subsection, we theoretically motivate our update rule (equations (\ref{cossim}) and (\ref{weightup})) through two theorems. First, we prove that, under general assumptions, SLGrad ensures the meta-minimization objective $\mathcal{M}^{val}_{m}$ monotonically decreases between every two updates. Second, we show that the cosine similarity between sample level gradients $\nabla _{\theta^{(t)}}\mathcal{L}_{ij}^{train}$ and the gradient of the generalization metric $\nabla_{\theta^{(t)}}\mathcal{M}^{val}_{m}$ is approximately proportional to the decrease in the main task metric $\mathcal{M}^{val}_{m}$.  Finally, we discuss SLGrad's computational efficiency.
\begin{table*}[t]
\begin{center}
 \caption{Comparison of the test set performance (Mean Squared Error) for the toy set-up with different noise levels. SLGrad significantly outperforms the other benchmarks. The hyperparameters are optimized for each algorithm. We compute the mean performance over different initializations.}
 \label{baselinetoy}
\begin{tabular}{lllllllll}
\hline
\rule{0pt}{12pt}
\rule{0pt}{12pt}
\textbf{Noise} & Static & OL-AUX & PCGrad & CAGrad & CosSim & GradNorm & Random & SLGrad\\
\\
\hline
\\[-6pt]
40 $\%$ & 0.86 $\pm$ 0.03 & 0.89 $\pm$ 0.02 & 0.90 $\pm$ 0.02& 0.89 $\pm$ 0.02 &  0.73 $\pm$ 0.38 & 0.88 $\pm$ 0.01 & 0.59 $\pm$ 0.35 &  \textbf{0.03 $\pm$ $3E^{-4}$}\\ 
 70 $\%$ & 1.07 $\pm$ $3E^{-3}$ & 1.27 $\pm$ 0.02 & 1.37 $\pm $ 0.05 & 1.13 $\pm$ 0.10 & 0.94 $\pm$ 0.18 & 1.04 $\pm$ 0.02 & 1.05 $\pm$ $6E^{-4}$ & \textbf{0.04 $\pm$ $9E^{-4}$}\\
\hline
\\[-6pt]
\end{tabular}
\end{center}
\end{table*}
\begin{theorem}
\label{theo1}
Assume $\mathcal{M}^{val}_{m}$ is a differentiable function representing the meta-(minimization)- objective as measured on the main task validation set, $\mathcal{L}_{SLGrad}$ (as defined in equation (\ref{SLGradloss})) is the total training loss and $\mathcal{L}_{ij}(\theta^{(t)})\geq 0$ is the sample-level loss corresponding to sample $i$ of task $j$. If the sample-level weights $w_{ij}$ are computed by the SLGrad update rule (\ref{cossim}) and (\ref{weightup}), and the model parameter vector $\theta^{(t)}$ is updated through gradient descent (\ref{gradd}), then the following statement holds:
\begin{equation}
    \mathcal{M}^{val}_{m}(\theta^{(t+1)}) - \mathcal{M}^{val}_{m}(\theta^{(t)}) \leq 0
    \label{mono}
\end{equation}
$\forall t \in [0, \#training steps]$.
\end{theorem}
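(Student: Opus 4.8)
The plan is to combine the gradient-descent update with a first-order Taylor expansion of the meta-objective and then exploit the sign structure that the clamp-and-normalize step builds into the weights. First I would make the parameter update explicit: substituting $\mathcal{L}_{SLGrad}$ from equation (\ref{SLGradloss}) into the gradient step (\ref{gradd}) gives $\theta^{(t+1)}-\theta^{(t)} = -\eta \sum_{i=1}^{N_{T}}\sum_{j=1}^{N_{B}} w^{(t)}_{ij}\,\nabla_{\theta^{(t)}}\mathcal{L}_{ij}$, where the $\nabla_{\theta^{(t)}}\mathcal{L}_{ij}$ are exactly the training-loss gradients appearing in (\ref{cossim}). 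Since $\mathcal{M}^{val}_{m}$ is assumed differentiable, a first-order expansion around $\theta^{(t)}$ yields $\mathcal{M}^{val}_{m}(\theta^{(t+1)}) - \mathcal{M}^{val}_{m}(\theta^{(t)}) = \nabla_{\theta^{(t)}}\mathcal{M}^{val}_{m}\cdot(\theta^{(t+1)}-\theta^{(t)}) + R$, with $R$ collecting the higher-order terms.

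Next I would substitute the update direction into the linear term and recognize each inner product as the unclamped alignment score of equation (\ref{cossim}). This collapses the leading term to $-\eta \sum_{i,j} w^{(t)}_{ij}\,\Tilde{w}^{(t)}_{ij}$, a weighted sum in which every sample's score is multiplied by its own nonnegative weight. The crucial observation is the sign agreement enforced by equation (\ref{weightup}): whenever $\Tilde{w}^{(t)}_{ij}\leq 0$ the clamp sets $w^{(t)}_{ij}=0$, while whenever $\Tilde{w}^{(t)}_{ij}>0$ the normalized weight is strictly positive, so every product satisfies $w^{(t)}_{ij}\,\Tilde{w}^{(t)}_{ij}\geq 0$. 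Written out, the surviving summands are $(\Tilde{w}^{(t)}_{ij})^{2}$ divided by the positive normalizer of (\ref{weightup}), making $\sum_{i,j} w^{(t)}_{ij}\,\Tilde{w}^{(t)}_{ij}\geq 0$ transparent; combined with $\eta>0$, the leading term is $\leq 0$.

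The main obstacle is controlling the remainder $R$ so that the exact inequality (\ref{mono}), rather than merely an approximate one, holds: the first-order argument only guarantees the bound up to an $O(\eta^{2})$ error. I would close this gap by invoking an $L$-smoothness (Lipschitz-gradient) assumption on $\mathcal{M}^{val}_{m}$, which I take to be what is meant by the ``general assumptions.'' The descent lemma then gives $R \leq \tfrac{L}{2}\eta^{2}\,\|\sum_{i,j} w^{(t)}_{ij}\nabla_{\theta^{(t)}}\mathcal{L}_{ij}\|^{2}$, and for any step size small enough that $\eta \leq \frac{2\sum_{i,j} w^{(t)}_{ij}\Tilde{w}^{(t)}_{ij}}{L\,\|\sum_{i,j} w^{(t)}_{ij}\nabla_{\theta^{(t)}}\mathcal{L}_{ij}\|^{2}}$ the positive remainder cannot overturn the negative leading term, yielding (\ref{mono}). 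Since nothing in this computation depends on the particular index $t$, the per-step decrease holds for every $t\in[0,\#\text{training steps}]$, giving the claim. If instead the authors intend the purely first-order (look-ahead) reading, the same algebra establishes the inequality directly and the smoothness bound serves only to justify the approximation, consistent with Theorem \ref{taylortheo}.
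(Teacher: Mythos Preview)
Your argument is essentially the paper's: both expand $\mathcal{M}^{val}_{m}(\theta^{(t+1)})$ to first order in $\eta$, substitute the gradient-descent step, and reduce the leading term to a sum of $w^{(t)}_{ij}\,\Tilde{w}^{(t)}_{ij}$ whose nonnegativity follows from the clamp in (\ref{weightup}). The paper merely organizes this as a two-case split (Lemma~1 when every $\Tilde{w}^{(t)}_{ij}=0$, Lemma~2 otherwise) rather than your unified ``squares over normalizer'' computation, but the content is the same.

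One point worth noting: the paper does \emph{not} control the remainder $R$. Its proof works throughout with the first-order Taylor approximation and writes ``$\approx$'' at the key steps, so the statement is effectively established in the look-ahead sense of Theorem~\ref{taylortheo}. Your $L$-smoothness bound and step-size condition are therefore an addition that goes beyond what the paper actually proves; if you want to match the paper's level of rigor you can simply drop that paragraph and state the inequality at first order, as you note in your final sentence.
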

\vspace{2mm}
The proof of this Theorem is provided in the Supplemental Material (section A). \\
\indent If the validation metric $\mathcal{M}^{val}_{m}$ in Theorem \ref{theo1} is also bounded by below an important implication is the convergence of the main task validation metric to a minimum. Additionally, assuming $\mathcal{M}^{val}_{m}$ is convex, this will correspond to the global minimum. The equality in equation (\ref{mono}) holds only when an optimum of $\mathcal{L}_{SLGrad}$ or $ \mathcal{M}_{m}^{val}$ is reached.  \\
\indent Next, extending upon the proofs by \cite{hyda,c15} we show that the sample-task level weights resulting from equation (\ref{cossim}) are proportional to the decrease of the main task objective metric $\mathcal{M}^{val}_{m}$. 
\\
\begin{theorem}
Assume $\mathcal{M}^{val}_{m}$ is a differentiable function, representing the meta-objective as measured on the
main task validation set and $\mathcal{L}_{SLGrad}$ the total training loss as defined in equation (\ref{SLGradloss}). Consider $\theta^{(t)}$ as the model parameter vector at time $t$.  If the sample-level weights $w_{ij}$ are computed by the SLGrad update rule (defined by equations (\ref{cossim}) and (\ref{weightup})), and the model parameter vector $\theta^{(t)}$ is updated through gradient descent (equation (\ref{gradd})), then the decrease in the validation metric can be approximated as
\label{taylortheo}
\begin{equation}
     \mathcal{M}_{m}^{val}(\theta^{(t+1)}) - \mathcal{M}_{m}^{val}(\theta^{(t)}) \approx - \eta[\nabla_{\theta^{(t)}}\mathcal{M}^{val}_{m}]^{T}\nabla_{\theta^{(t)}}\mathcal{L}_{SLGrad} ,
     \label{tayl}
\end{equation}
where $\eta$ is the learning rate. 
\end{theorem}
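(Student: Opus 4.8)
The plan is to obtain equation (\ref{tayl}) from a first-order Taylor expansion of the meta-objective $\mathcal{M}^{val}_{m}$ around the current parameter vector $\theta^{(t)}$, combined with the explicit form of the gradient descent step. First I would write
\begin{equation}
\mathcal{M}_{m}^{val}(\theta^{(t+1)}) = \mathcal{M}_{m}^{val}(\theta^{(t)}) + [\nabla_{\theta^{(t)}}\mathcal{M}^{val}_{m}]^{T}(\theta^{(t+1)} - \theta^{(t)}) + R,
\end{equation}
where $R$ collects the second- and higher-order remainder terms, i.e. $R = O(\|\theta^{(t+1)} - \theta^{(t)}\|^{2})$ under the assumption that $\mathcal{M}^{val}_{m}$ is twice continuously differentiable with bounded Hessian in a neighbourhood of $\theta^{(t)}$. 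This is the natural starting point because the right-hand side of (\ref{tayl}) is exactly the linear term of such an expansion.

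The second step substitutes the parameter update. Since SLGrad performs gradient descent on the total loss $\mathcal{L}_{SLGrad}$ in its final backward pass (step 10 of Algorithm \ref{alg}), the displacement is $\theta^{(t+1)} - \theta^{(t)} = -\eta\,\nabla_{\theta^{(t)}}\mathcal{L}_{SLGrad}$, where the sample weights $w_{ij}^{(t)}$ are treated as fixed constants computed prior to this pass. Inserting this into the linear term yields
\begin{equation}
\mathcal{M}_{m}^{val}(\theta^{(t+1)}) - \mathcal{M}_{m}^{val}(\theta^{(t)}) = -\eta\,[\nabla_{\theta^{(t)}}\mathcal{M}^{val}_{m}]^{T}\nabla_{\theta^{(t)}}\mathcal{L}_{SLGrad} + R,
\end{equation}
and dropping the remainder $R$ gives the claimed approximation. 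I would additionally observe that, because $\nabla_{\theta^{(t)}}\mathcal{L}_{SLGrad} = \sum_{i}\sum_{j} w_{ij}^{(t)}\,\nabla_{\theta^{(t)}}\mathcal{L}_{ij}$, the right-hand side equals $-\eta\sum_{i}\sum_{j} w_{ij}^{(t)}\,\Tilde{w}_{ij}^{(t)}$ by the definition (\ref{cossim}). This makes the announced proportionality between the decrease of $\mathcal{M}^{val}_{m}$ and the weights explicit, and since both $w_{ij}^{(t)}\geq 0$ and the surviving $\Tilde{w}_{ij}^{(t)}\geq 0$, it also recovers the sign statement of Theorem \ref{theo1}.

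The main obstacle is justifying that the remainder $R$ is negligible, as this is precisely what turns the exact identity into the stated approximation. The term scales like $\eta^{2}\|\nabla_{\theta^{(t)}}\mathcal{L}_{SLGrad}\|^{2}$ times a bound on the operator norm of the Hessian, so it is controlled provided the learning rate $\eta$ is small and the gradients and curvature remain bounded along the optimization trajectory; this is where I would invoke the smoothness assumptions and defer the finer quantitative bound to the supplemental discussion in section C1. A secondary but important point to state carefully is that the weights entering $\nabla_{\theta^{(t)}}\mathcal{L}_{SLGrad}$ must be held fixed, with no differentiation through equations (\ref{cossim})--(\ref{weightup}), consistent with the two-pass structure of Algorithm \ref{alg}; otherwise an additional term involving $\partial w_{ij}^{(t)}/\partial\theta$ would appear and the clean first-order expression would no longer hold.
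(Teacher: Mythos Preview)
Your proposal is correct and follows essentially the same approach as the paper: a first-order Taylor expansion of $\mathcal{M}_{m}^{val}$ around $\theta^{(t)}$ followed by substitution of the gradient descent step $\theta^{(t+1)}-\theta^{(t)}=-\eta\nabla_{\theta^{(t)}}\mathcal{L}_{SLGrad}$. Your additional remarks on the remainder term, the fixed-weight caveat, and the explicit link to the $w_{ij}^{(t)}\Tilde{w}_{ij}^{(t)}$ sum go beyond what the paper's proof spells out but are consistent with it.
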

The proof is provided in the Supplemental Material (section A).
\indent Theorem \ref{taylortheo} shows that (i) the higher the sample weight $w_{ij}^{(t)}$ (equation \ref{cossim}), the greater the decrease of the validation metric $\mathcal{M}_{m}(\theta^{(t+1)})$ the sample induces, and (ii) that no explicit computation of $\mathcal{M}_{m}(\theta^{(t+1)})$ is required. As such, no explicit computation of the look-ahead update is needed, resulting in greater computational efficiency for the algorithm. We provide experimental justification for the use of the approximation in the Supplemental Material (section C).
\paragraph{The computational efficiency of SLGrad.}
SLGrad has a total computational overhead, compared to static MTL, with one forward and two backward passes. However, two main efficiency measures are taken: compared to other algorithms that apply a similar validation meta-objective \cite{c35}, we save another forward pass and parameter update by using the approximation in equation (\ref{tayl}). Furthermore, the recently introduced Python library \textit{Opacus} \cite{opacus} facilitates a dramatical speed-up in the calculation of sample-level gradients (i.e., line 6 in Algorithm \ref{alg}).

\section{Simulation Study \label{sim}}
Using a toy set-up, we explore how SLGrad handles (i) the sample-level weighting problem, and (ii) the task-level weighting problem. 

Aside from a comparison of SLGrad to state-of-the-art dynamic weighting algorithms, we investigate:
\begin{itemize}
    \item the sample-level weighting distribution throughout training.  
    \item the influence of sample-level weighting on aggregate task prioritization.
\end{itemize}

\subsection{Set-up}
Following  \cite{hyda,c2}, a multi-task toy regression set-up is generated based on the following function:
\begin{equation}
    y_{out}=f_{i}(x_{in})= \sigma_{i}\tanh [(\textbf{B}+\epsilon_{i})\textbf{x}],
    \label{basis}
\end{equation}
where $x_{in}$ is the ten-dimensional input vector. The common basis $\textbf{B}$ and task-dependent $\epsilon_{i}$ represent constant matrices generated IDD from $\mathcal{N}(0,\sqrt{1})$ and $\mathcal{N}(0,\sqrt{3.5})$ respectively. Additionally, $\sigma_{i}$ is a task-dependent scalar that determines the scale of each task. To ensure the beneficial potential of effective sample-level weighting in this scenario, we corrupt part of the training data by adding noise as follows $y_{out}=y_{out}+\mathcal{N}(0,\sqrt{2})$. Samples from the original distribution are labeled as \textit{clean}, corrupted samples as \textit{noisy}. We use the validation set loss as the meta-objective. The full implementation details, including the model backbone used for all models, are included in supplement B.  
\subsection{Results}
Our results on the toy example show that SLGrad is able to jointly solve the sample- and task-level weighting problems.
\begin{figure}[h!]
    \centering
    \includegraphics[width=\columnwidth]{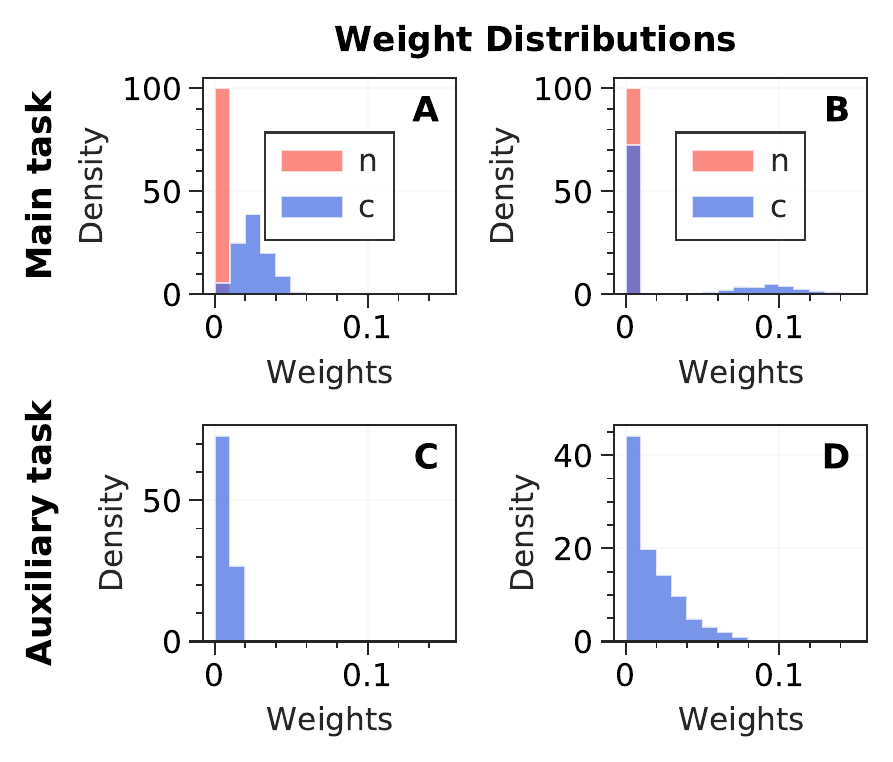}
    \caption{Weight distributions of the main task, for clean and noisy samples (top), and for an auxiliary task (bottom).  Panel A is extracted in epoch 5, B in epoch 30. Panels C and D depict the auxiliary task weight distributions for 40 and 70 percent noise, respectively. SLGrad shifts weight from the full clean distribution to hard examples, and to relatively helpful tasks. We provided the learning curves corresponding to the experiments corresponding to Figure \ref{Fortysss} in the Supplemental Material (section C).}
    \label{Fortysss}
\end{figure}
\paragraph{Sample-level weighting.}
Figure~\ref{Fortysss} shows that during the initial stages of training (panel A), SLGrad systematically cancels out noisy samples by assigning zero weight. At the same time, clean samples are effectively exploited (i.e., positive weights) to learn the objective distribution. As training progresses, fewer clean samples remain useful for improving generalization performance (panel C). The progression from panel A to B illustrates how the algorithm first learns the general distribution and then focuses on the harder examples (cfr. hard example mining), while still ignoring the noisy samples. These dynamics were observed for the main task and the auxiliary tasks alike.
\paragraph{Task-level weighting.}
Going from panel C to D on Figure~\ref{Fortysss}, we observe that SLGrad increasingly assigns positive weights to samples belonging to $\mathcal{T}_{a}$ to learn a good representation for $\mathcal{T}_{m}$. On aggregate, the auxiliary task distribution thus receives more weight as the main task becomes more noisy. This finding appeals to the general intuition that the weight distribution should adapt to a shift of signal across tasks. Finally, we note that SLGrad significantly outperforms the MTL benchmarks (Table~\ref{baselinetoy}) in terms of generalization performance. This performance discrepancy is the direct result of classic MTL algorithms being unable to adapt weights beyond the task level. As a result, the benchmark algorithms cannot effectively differentiate between the two types of samples during model training. 
\section{Real-World Experiments \label{RW}}
In this section, the generalization performance of SLGrad is benchmarked in a series of experiments based on real-world and semi-synthetic data. First, the empirical set-up and baseline algorithms are introduced in subsection \ref{datasets} and subsection \ref{baselines} respectively. Second, the resulting empirical analysis is presented in subsection \ref{results}.
\subsection{Set-up\label{datasets}}
We evaluate the generalization performance and robustness of SLGrad on two real-world datasets commonly used in MTL benchmarks: CIFAR-10 \cite{cifar10} and Multi-MNIST \cite{c1000}. For all experiments, we adapt the LeNet-5 architecture \cite{cLENET} as the backbone for the shared layers. Binary cross-entropy and cross-entropy are used as the task-specific loss functions for CIFAR-10 and Multi-MNIST, respectively. All hyperparameters are optimized through grid search for each algorithm while averaging several initializations. The full configurations for all the experiments are in the supplemental material (section B). For reproducibility purposes, the Python code used to conduct the experiments in this paper will be released. 
\begin{table*}[t]
\begin{center}
{\caption{Comparison with state-of-the-art MTL weighting algorithms on original and label flipped CIFAR-10 set-ups and clean Multi-MNIST. We compare the performance in terms of (Binary) Cross-Entropy for CIFAR-10 and Multi-MNIST. The best results for each experiment are marked in bold. The hyperparameters are optimized for each algorithm. We compute the mean performance over different initializations.}\label{baselineRW}}
\begin{tabular}{l l l l l l  }
\hline
\rule{0pt}{12pt}
\rule{0pt}{12pt}
Model & Clean  & Uniform Flip 40  & Uniform Flip 70 & Background Flip & Clean Multi-MNIST %& NYUv2 (mIOU).\\
\\
\hline
\\[-6pt]
 Static & 0.336 $\pm$ 0.0038 & 0.586 $\pm$ 0.0018 & 0.86 $\pm$ 0.03 & 0.41 $\pm$ 0.09 & 1.95 $\pm$ 0.38 \\
OL-AUX & 0.237 $\pm$ 0.0024 & 0.591 $\pm$ 0.0272 & 0.96 $\pm$ 0.03  & 0.34 $\pm$ 0.03 & 1.16 $\pm$ 0.01  \\
 PC Grad & 0.328 $\pm$ 0.0002 & 0.585 $\pm$ 0.0012
 &  0.84 $\pm$ 0.05  & 0.36 $\pm$ 0.02 & 1.22 $\pm$ 0.03\\
 CA Grad & \textbf{0.235} $\pm$ 0.0031 &  0.569 $\pm$ 0.0038 & 0.94 $\pm$ 2$E^{-3}$ & 0.34 $\pm$ 0.04 & 1.19 $\pm$ 0.02\\\
 CosSim & 0.288 $\pm$ 0.0027 & 0.571 $\pm$ 0.0035 & 0.67 $\pm$ 0.01  & 0.69 $\pm$ 0.14 & 1.38 $\pm$ 0.10\\
 GradNorm & 0.238 $\pm$ 0.0302  & 0.557 $\pm$ 0.0017 & 1.09 $\pm$ 0.02 & 0.33 $\pm$ 0.05 & 1.29 $\pm$ 0.33\\
 Random & 0.251 $\pm$ 0.0012 & 0.572 $\pm$ 0.0027 & 0.96 $\pm$ 0.02 & 0.29 $\pm$ 0.03 & 1.20 $\pm$ 0.03\\
 SLGrad & 0.249 $\pm$ 0.0072 & \textbf{0.288} $\pm$ 0.0263 & \textbf{0.33} $\pm$ 0.04 & \textbf{0.28} $\pm$ 0.02 & \textbf{0.77} $\pm$ 0.03 \\
\hline
\\[-6pt]
\end{tabular}
\end{center}
\end{table*}
\paragraph{Experiments.}
We use the following set-ups for Multi-MNIST, CIFAR-10:
\begin{itemize}
    \item \textit{Clean}: the original dataset without any semi-synthetic interventions. For Multi-MNIST and CIFAR-10, the leftmost digit and the airplanes class are chosen as the respective main tasks.
\end{itemize}
Additionally, the effectiveness of SLGrad is evaluated for the following common semi-synthetic extensions \cite{c35} of CIFAR-10:
\begin{itemize}
    \item \textit{Uniform Label Flips}: samples from all classes are assigned a different label with 40\% and 70\% probability, respectively. 
    \item \textit{Background Label Flips}: all flipped labels (20 percent) are flipped to the same class, i.e., the background class. This type of flip also induces class imbalance as the background class will be likely to dominate. 
\end{itemize}
\begin{figure}[h]
    \includegraphics[width=\columnwidth]{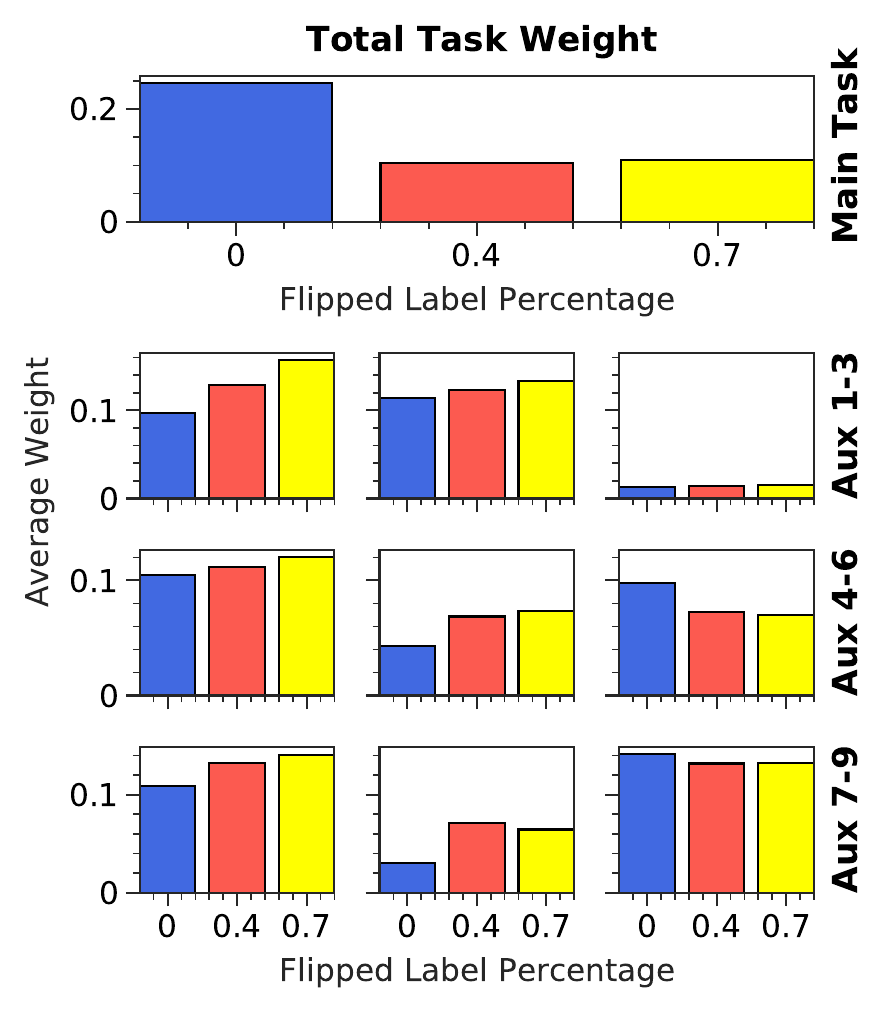}
    \caption{The total average task weights for the main task and the nine auxiliary tasks assigned by SLGrad on CIFAR-10 for 0 (blue), 40 (red) and 70 (yellow) percent of flipped labels. The average task weights are computed by summing over the sample-level weights for all tasks and then taking the time average. While the weights of the main task decrease for an increasing amount of noise, the auxiliary task weights increase.}
    \label{Totalweights}
\end{figure}
More information about the experimental setups is provided in the supplemental material, section B.
\subsection{Baseline Algorithms  \label{baselines}}
We compare SLGrad with seven baseline algorithms: \textit{Static Weighting}: A standard baseline with static and uniformly assigned task weights. \textit{Random Weighting (RW)}: A simple weighting method where a model is trained with randomly assigned weights \cite{c38}. \textit{Cosine Similarity (CosSim)}: This algorithm only considers auxiliary task signals when the corresponding gradient aligns with the main task gradient direction \cite{c15}. \textit{OL-AUX}: An online learning algorithm that uses auxiliary task gradient directions, computed on previous batches to update auxiliary task weights. The gradient directions are computed on past batches and updated with gradient descent every few steps \cite{c9}. \textit{PCGrad}: An algorithm that projects conflicting gradient directions of a task to the normal plane of any other task it negatively interferes with \cite{c13}. \textit{CAGrad}: Conflict-Averse Gradient descent that aims to simultaneously minimize the average loss and leverages the worst local improvement of individual tasks \cite{c30}. \textit{GradNorm}: Gradient normalization algorithm that automatically tunes gradient magnitudes to dynamically balance training in deep multi-task models \cite{c2}. Selection of the baseline algorithms was performed based on the width of adoption and current state-of-the-art performance in dynamic MTL.
\subsection{Results \label{results}}
In this subsection, we summarize the results obtained from the experiments introduced in the previous sections. 
\paragraph{Adaptive weighting.} In Figure~\ref{Totalweights}, the average task weight, calculated by summing over all sample weights, is depicted for all CIFAR tasks for the three CIFAR-10 set-ups: 0, 40, and 70 percent label flips. Interestingly, we observe how SLGrad shifts more weight to the auxiliary tasks when more labels are flipped. Specifically, the average total weight increases for seven out of nine auxiliary tasks when comparing the 0 to the 40 percent scenario. In an MTL context, this implies that the algorithm increasingly exploits auxiliary tasks to learn a meaningful featurization to correctly classify the main task. The data used to generate Figure~\ref{Totalweights} is provided as a plot in the Supplemental Material (section C). 
\begin{figure}[h!]
    \centering
    \includegraphics[width=\columnwidth]{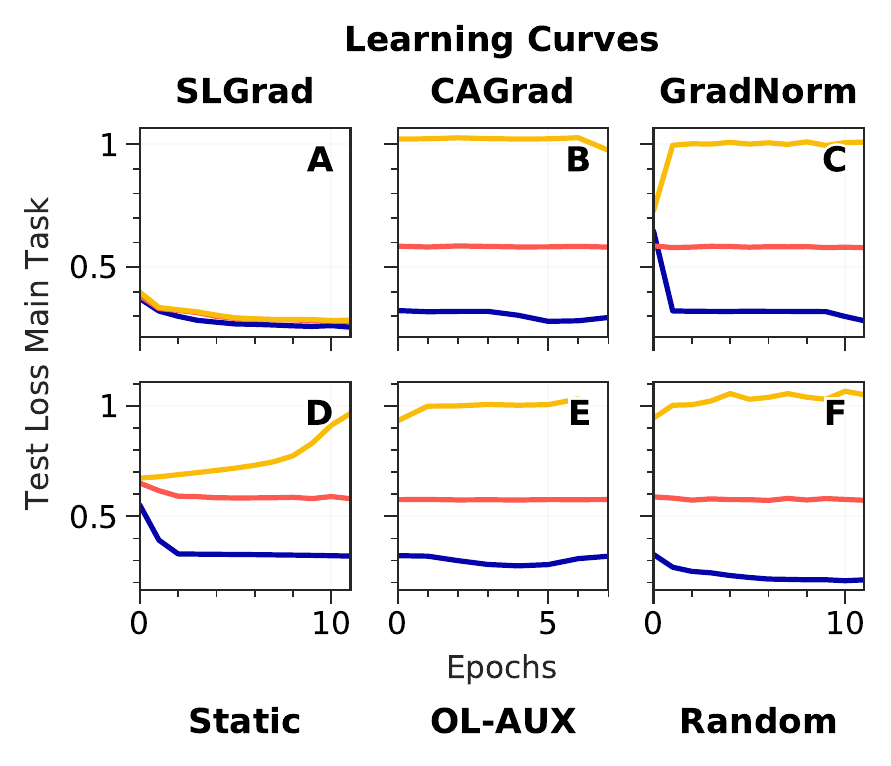}
    \caption{Learning curves for 0 (blue), 40 (red) and, 70 (yellow) percent of flipped labels for SLGrad and 5 other baseline algorithms on the semi-synthetic CIFAR 10 Uniform Flip dataset. The benchmark algorithms overfit the training set, while SLGrad effectively exploits the meta-objective signal.}
    \label{fig:learncurv}
\end{figure}
\paragraph{Generalization performance.}
As shown in Table~\ref{baselineRW}, SLGrad outperforms all the baseline algorithms in the flipped label set-ups. In the clean set-ups,  SLGrad also performs best for Multi-MNIST, and competitively for CIFAR-10.

Figure~\ref{fig:learncurv} depicts the learning curves for a selection of algorithms as measured on the test set. While all baseline algorithms overfit the inserted training noise, the learning trajectory of SLGrad remains able to generalize well. 

Finally, these results confirm that the benefit of sample-level weighting depends on the specific dataset and task at hand. This was previously only noted in a single-task context \cite{c999}.
\section{Conclusions and Future Work}
In this paper, we have investigated sample-level weighting as a solution to the auxiliary learning problem. To this end, SLGrad was proposed, a simple yet flexible MTL weighting algorithm. SLGrad performs dynamic effective weighting at the sample level and can filter out data set biases by exploiting a hold out meta-objective. From a MTL perspective, this paper can be seen as an extension towards more granular, i.e., sample-level task-specific weighting ($i$ in $w_{ij}$). From a sample weighting perspective, importance weighting is extended to multiple dimensions (i.e., tasks, or $j$ in $w_{ij}$). 
The resulting unified perspective offers multiple promising avenues for future research.\\
\indent Regarding applied research, depending on the meta-objective formulation, SLGrad can be applied across all domains where importance weighting used, such as causality, cost-sensitive learning, and learning with class imbalance. Additionally, SLGrad provides novel perspectives for practitioners and researchers alike to incorporate auxiliary tasks to solve the problem at hand.\\
\indent Regarding fundamental research, deeper understanding of the complex interactions on a granular level between auxiliary tasks and the main task across various types of network architectures and data modalities is needed. Moving beyond the task level marks a paramount step in this process.
\bibliography{SLGRAD}

\begin{thebibliography}{10}

\bibitem{c28}
Waseem Abbas and Murtaza Tap, `Adaptively weighted multi-task learning using
  inverse validation loss', in {\em ICASSP 2019 - 2019 IEEE International
  Conference on Acoustics, Speech and Signal Processing (ICASSP)}, pp.
  1408--1412, (2019).

\bibitem{c21}
Noha~Radwan Abhinav~Valada and Wolfram Burgard, `Deep auxiliary learning for
  visual localization and odometry', in {\em Proceedings of the IEEE
  International Conference on Robotics and Automation (ICRA)}, Brisbane,
  Australia, (2018).

\bibitem{wandb}
Lukas Biewald.
\newblock Experiment tracking with weights and biases, 2020.
\newblock Software available from wandb.com.

\bibitem{impweight}
Jonathon Byrd and Zachary Lipton, `What is the effect of importance weighting
  in deep learning?', in {\em International Conference on Machine Learning},
  pp. 872--881. PMLR, (2019).

\bibitem{c2}
Zhao Chen, Vijay Badrinarayanan, Chen-Yu Lee, and Andrew Rabinovich, `Gradnorm:
  Gradient normalization for adaptive loss balancing in deep multitask
  networks', in {\em ICML}, pp. 793--802, (2018).

\bibitem{c14}
Zhao Chen, Jiquan Ngiam, Yanping Huang, Thang Luong, Henrik Kretzschmar, Yuning
  Chai, and Dragomir Anguelov, `Just pick a sign: Optimizing deep multitask
  models with gradient sign dropout', in {\em Advances in Neural Information
  Processing Systems}, eds., H.~Larochelle, M.~Ranzato, R.~Hadsell, M.F.
  Balcan, and H.~Lin, volume~33, pp. 2039--2050. Curran Associates, Inc.,
  (2020).

\bibitem{c29}
Sumanth Chennupati, Ganesh Sistu, Senthil~Kumar Yogamani, and Samir~A.
  Rawashdeh, `Multinet++: Multi-stream feature aggregation and geometric loss
  strategy for multi-task learning', {\em 2019 IEEE/CVF Conference on Computer
  Vision and Pattern Recognition Workshops (CVPRW)},  1200--1210, (2019).

\bibitem{c27}
Roberto Cipolla, Yarin Gal, and Alex Kendall, `Multi-task learning using
  uncertainty to weigh losses for scene geometry and semantics', in {\em 2018
  IEEE/CVF Conference on Computer Vision and Pattern Recognition}, pp.
  7482--7491, (2018).

\bibitem{cMNIST}
Li~Deng, `The mnist database of handwritten digit images for machine learning
  research', {\em IEEE Signal Processing Magazine}, {\bf 29}(6),  141--142,
  (2012).

\bibitem{c15}
Yunshu Du, Wojciech~M. Czarnecki, Siddhant~M. Jayakumar, Mehrdad Farajtabar,
  Razvan Pascanu, and Balaji Lakshminarayanan.
\newblock Adapting auxiliary losses using gradient similarity, 2018.

\bibitem{c31}
Yunshu Du, Wojciech~M. Czarnecki, Siddhant~M. Jayakumar, Razvan Pascanu, and
  Balaji Lakshminarayanan, `Adapting auxiliary losses using gradient
  similarity', {\em ArXiv}, {\bf abs/1812.02224}, (2018).

\bibitem{imp}
Tongtong Fang, Nan Lu, Gang Niu, and Masashi Sugiyama, `Rethinking importance
  weighting for deep learning under distribution shift', {\em Advances in
  Neural Information Processing Systems}, {\bf 33},  11996--12007, (2020).

\bibitem{TAG}
Chris Fifty, Ehsan Amid, Zhe Zhao, Tianhe Yu, Rohan Anil, and Chelsea Finn,
  `Efficiently identifying task groupings for multi-task learning', {\em
  Advances in Neural Information Processing Systems}, {\bf 34},  27503--27516,
  (2021).

\bibitem{c810}
Chelsea Finn, Pieter Abbeel, and Sergey Levine, `Model-agnostic meta-learning
  for fast adaptation of deep networks', in {\em International conference on
  machine learning}, pp. 1126--1135. PMLR, (2017).

\bibitem{c32}
Yoav Freund and Robert~E Schapire, `A decision-theoretic generalization of
  on-line learning and an application to boosting', {\em Journal of Computer
  and System Sciences}, {\bf 55}(1),  119--139, (1997).

\bibitem{c3}
Ting Gong, Tyler Lee, Cory Stephenson, Venkata Renduchintala, Suchismita Padhy,
  Anthony Ndirango, Gokce Keskin, and Oguz~H. Elibol, `A comparison of loss
  weighting strategies for multi task learning in deep neural networks', {\em
  IEEE Access}, {\bf 7},  141627--141632, (2019).

\bibitem{c17}
Yun He, Xue Feng, Cheng Cheng, Geng Ji, Yunsong Guo, and James Caverlee,
  `Metabalance: Improving multi-task recommendations via adapting gradient
  magnitudes of auxiliary tasks', in {\em Proceedings of the ACM Web Conference
  2022}, WWW '22, p. 2205–2215, New York, NY, USA, (2022). Association for
  Computing Machinery.

\bibitem{metasurv}
T.~Hospedales, A.~Antoniou, P.~Micaelli, and A.~Storkey, `Meta-learning in
  neural networks: A survey', {\em IEEE Transactions on Pattern Analysis and
  Machine Intelligence}, {\bf 44}(09),  5149--5169, (sep 2022).

\bibitem{c39}
Adri{\'a}n {Javaloy} and Isabel {Valera}, `{RotoGrad: Gradient Homogenization
  in Multitask Learning}', {\em arXiv e-prints},  arXiv:2103.02631, (March
  2021).

\bibitem{c800}
Lu~Jiang, Zhengyuan Zhou, Thomas Leung, Li-Jia Li, and Li~Fei-Fei, `Mentornet:
  Learning data-driven curriculum for very deep neural networks on corrupted
  labels', in {\em ICML}, (2018).

\bibitem{c42}
Yonghan Jung, Jin Tian, and Elias Bareinboim, `Estimating causal effects using
  weighting-based estimators', {\em Proceedings of the AAAI Conference on
  Artificial Intelligence}, {\bf 34}(06),  10186--10193, (Apr. 2020).

\bibitem{c40}
Angelos Katharopoulos and Francois Fleuret, `Not all samples are created equal:
  Deep learning with importance sampling', in {\em Proceedings of the 35th
  International Conference on Machine Learning}, eds., Jennifer Dy and Andreas
  Krause, volume~80 of {\em Proceedings of Machine Learning Research}, pp.
  2525--2534. PMLR, (10--15 Jul 2018).

\bibitem{c802}
Salman~Hameed Khan, Mohammed Bennamoun, Ferdous~Ahmed Sohel, and Roberto
  Togneri, `Cost sensitive learning of deep feature representations from
  imbalanced data', {\em CoRR}, {\bf abs/1508.03422}, (2015).

\bibitem{cifar10}
Alex Krizhevsky, `Learning multiple layers of features from tiny images',
  Technical report, (2009).

\bibitem{c7}
Isabelle Leang, Ganesh Sistu, Fabian Bürger, Andrei Bursuc, and Senthil
  Yogamani, `Dynamic task weighting methods for multi-task networks in
  autonomous driving systems', in {\em 2020 IEEE 23rd International Conference
  on Intelligent Transportation Systems (ITSC)}, pp. 1--8, (2020).

\bibitem{cLENET}
Yann LeCun, L{\'e}on Bottou, Yoshua Bengio, and Patrick Haffner,
  `Gradient-based learning applied to document recognition', {\em Proceedings
  of the IEEE}, {\bf 86}(11),  2278--2324, (1998).

\bibitem{c11}
Hui Li, Yanlin Wang, Ziyu Lyu, and Jieming Shi, `Multi-task learning for
  recommendation over heterogeneous information network', {\em IEEE
  Transactions on Knowledge and Data Engineering}, {\bf 34}(2),  789--802,
  (2022).

\bibitem{c801}
Kuan Li, Xiangfei Kong, Zhi Lu, Liu Wenyin, and Jianping Yin, `Boosting
  weighted elm for imbalanced learning', {\em Neurocomput.}, {\bf 128},
  15–21, (mar 2014).

\bibitem{c38}
Baijiong {Lin}, Feiyang {Ye}, Yu~{Zhang}, and Ivor~W. {Tsang}, `{Reasonable
  Effectiveness of Random Weighting: A Litmus Test for Multi-Task Learning}',
  {\em arXiv e-prints},  arXiv:2111.10603, (November 2021).

\bibitem{LibMTL}
Baijiong Lin and Yu~Zhang, `{LibMTL}: A python library for multi-task
  learning', {\em arXiv preprint arXiv:2203.14338}, (2022).

\bibitem{c9}
Xingyu Lin, Harjatin Baweja, George Kantor, and David Held, `Adaptive auxiliary
  task weighting for reinforcement learning', in {\em Advances in Neural
  Information Processing Systems}, eds., H.~Wallach, H.~Larochelle,
  A.~Beygelzimer, F.~d\textquotesingle Alch\'{e}-Buc, E.~Fox, and R.~Garnett,
  volume~32. Curran Associates, Inc., (2019).

\bibitem{c30}
Bo~Liu, Xingchao Liu, Xiaojie Jin, Peter Stone, and Qiang Liu, `Conflict-averse
  gradient descent for multi-task learning', in {\em Advances in Neural
  Information Processing Systems}, eds., M.~Ranzato, A.~Beygelzimer,
  Y.~Dauphin, P.S. Liang, and J.~Wortman Vaughan, volume~34, pp. 18878--18890.
  Curran Associates, Inc., (2021).

\bibitem{c16}
Liyang Liu, Yi~Li, Zhanghui Kuang, Jing-Hao Xue, Yimin Chen, Wenming Yang,
  Qingmin Liao, and Wayne Zhang, `Towards impartial multi-task learning', in
  {\em International Conference on Learning Representations}, (2021).

\bibitem{c23}
Xianpeng Liu, Nan Xue, and Tianfu Wu, `Learning auxiliary monocular contexts
  helps monocular 3d object detection', {\em Proceedings of the AAAI Conference
  on Artificial Intelligence}, {\bf 36}(2),  1810--1818, (Jun. 2022).

\bibitem{c33}
Mansheej Paul, Surya Ganguli, and Gintare~Karolina Dziugaite, `Deep learning on
  a data diet: Finding important examples early in training', in {\em Advances
  in Neural Information Processing Systems}, eds., M.~Ranzato, A.~Beygelzimer,
  Y.~Dauphin, P.S. Liang, and J.~Wortman Vaughan, volume~34, pp. 20596--20607.
  Curran Associates, Inc., (2021).

\bibitem{c20}
Bowen Qin, Min Yang, Lidong Bing, Qingshan Jiang, Chengming Li, and Ruifeng Xu,
  `Exploring auxiliary reasoning tasks for task-oriented dialog systems with
  meta cooperative learning', {\em Proceedings of the AAAI Conference on
  Artificial Intelligence}, {\bf 35}(15),  13701--13708, (May 2021).

\bibitem{c35}
Mengye Ren, Wenyuan Zeng, Bin Yang, and Raquel Urtasun, `Learning to reweight
  examples for robust deep learning', in {\em Proceedings of the 35th
  International Conference on Machine Learning}, eds., Jennifer Dy and Andreas
  Krause, volume~80 of {\em Proceedings of Machine Learning Research}, pp.
  4334--4343. PMLR, (10--15 Jul 2018).

\bibitem{c6}
Sebastian Ruder, `An overview of multi-task learning in deep neural networks',
  (06 2017).

\bibitem{c1000}
Sara Sabour, Nicholas Frosst, and Geoffrey~E Hinton, `Dynamic routing between
  capsules', {\em Advances in neural information processing systems}, {\bf 30},
  (2017).

\bibitem{cmultiop}
Ozan Sener and Vladlen Koltun, `Multi-task learning as multi-objective
  optimization', {\em Advances in neural information processing systems}, {\bf
  31}, (2018).

\bibitem{c809}
Jun Shu, Qi~Xie, Lixuan Yi, Qian Zhao, Sanping Zhou, Zongben Xu, and Deyu Meng,
  `Meta-weight-net: Learning an explicit mapping for sample weighting', in {\em
  NeurIPS}, (2019).

\bibitem{metalearn}
Sebastian Thrun, `Lifelong learning algorithms', in {\em Learning to learn},
  181--209, Springer, (1998).

\bibitem{C18}
Simon Vandenhende, Stamatios Georgoulis, Wouter Van~Gansbeke, Marc Proesmans,
  Dengxin Dai, and Luc Van~Gool, `Multi-task learning for dense prediction
  tasks: A survey', {\em IEEE Transactions on Pattern Analysis and Machine
  Intelligence}, {\bf 44}(7),  3614--3633, (2022).

\bibitem{MLearning}
Joaquin Vanschoren, `Meta-learning', in {\em Automated machine learning},
  35--61, Springer, Cham, (2019).

\bibitem{robmtl}
Pavan Kumar~Anasosalu Vasu, Shreyas Saxena, and Oncel Tuzel.
\newblock Instance-level task parameters: A robust multi-task weighting
  framework, 2021.

\bibitem{hyda}
Sam Verboven, Hafeez Chaudhary, Jeroen Berrevoets, Vincent Ginis, and Wouter
  Verbeke, `Hydalearn', {\em Applied Intelligence},  1--15, (07 2022).

\bibitem{c999}
Da~Xu, Yuting Ye, and Chuanwei Ruan, `Understanding the role of importance
  weighting for deep learning', in {\em International Conference on Learning
  Representations}, (2021).

\bibitem{c8}
Xi~Yin and Xiaoming Liu, `Multi-task convolutional neural network for
  pose-invariant face recognition', {\em IEEE Transactions on Image
  Processing}, {\bf 27}(2),  964--975, (2018).

\bibitem{opacus}
Ashkan Yousefpour, Igor Shilov, Alexandre Sablayrolles, Davide Testuggine,
  Karthik Prasad, Mani Malek, John Nguyen, Sayan Ghosh, Akash Bharadwaj,
  Jessica Zhao, et~al., `Opacus: User-friendly differential privacy library in
  pytorch', {\em arXiv preprint arXiv:2109.12298}, (2021).

\bibitem{c13}
Tianhe Yu, Saurabh Kumar, Abhishek Gupta, Sergey Levine, Karol Hausman, and
  Chelsea Finn, `Gradient surgery for multi-task learning', in {\em Advances in
  Neural Information Processing Systems}, eds., H.~Larochelle, M.~Ranzato,
  R.~Hadsell, M.F. Balcan, and H.~Lin, volume~33, pp. 5824--5836. Curran
  Associates, Inc., (2020).

\bibitem{c803}
B.~Zadrozny, J.~Langford, and N.~Abe, `Cost-sensitive learning by
  cost-proportionate example weighting', in {\em Third IEEE International
  Conference on Data Mining}, pp. 435--442, (2003).

\bibitem{c804}
Huimin Zhao, `Instance weighting versus threshold adjusting for cost-sensitive
  classification', {\em Knowledge and Information Systems}, {\bf 15}(3),
  321--334, (2008).

\bibitem{c36}
Zhaowei Zhu, Tongliang Liu, and Yang Liu, `A second-order approach to learning
  with instance-dependent label noise', in {\em Proceedings of the IEEE/CVF
  Conference on Computer Vision and Pattern Recognition (CVPR)}, pp.
  10113--10123, (June 2021).

\end{thebibliography}

\onecolumn
\renewcommand{\theequation}{S.\arabic{equation}}
\renewcommand{\thetheorem}{1}
\section*{Supplement A: Theoretical proofs}
The first supplement consists of the theoretical proofs of the two Theorems proposed in section \ref{theo} of the main paper. 
\section*{A1. Proof of Theorem 1 \label{proof1}}
\begin{theorem}
Assume $\mathcal{M}^{val}_{m}$ is a differentiable function representing the meta-(minimization)- objective as measured on the main task validation set, $\mathcal{L}_{SLGrad}$ (as defined in equation \ref{SLGradloss}) is the total training loss and $\mathcal{L}_{ij}(\theta^{(t)})\geq 0$ is the sample-level loss corresponding to sample $i$ of task $j$. If the sample-level weights $w_{ij}$ are computed by the SLGrad update rule (defined by equations \ref{SLGradloss} to \ref{weightup}), and the model parameter vector $\theta^{(t)}$ is updated through gradient descent (equation \ref{gradd}), then the following statement holds:
\begin{equation}
    \mathcal{M}^{val}_{m}(\theta^{(t+1)}) - \mathcal{M}^{val}_{m}(\theta^{(t)}) \leq 0
    \label{mono2}
\end{equation}
$\forall t \in [0, \#training steps]$.
\end{theorem}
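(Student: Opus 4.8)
The plan is to control the one-step change in the meta-objective through a first-order expansion and then exploit the precise algebraic form of the SLGrad weights to show that the induced directional derivative is non-negative. First I would write the gradient-descent step explicitly as $\theta^{(t+1)}-\theta^{(t)}=-\eta\,\nabla_{\theta^{(t)}}\mathcal{L}_{SLGrad}$ and Taylor-expand the differentiable map $\mathcal{M}^{val}_{m}$ about $\theta^{(t)}$. Retaining the linear term (the remainder is handled below) gives
\[
\mathcal{M}^{val}_{m}(\theta^{(t+1)})-\mathcal{M}^{val}_{m}(\theta^{(t)})\approx-\eta\,\nabla_{\theta^{(t)}}\mathcal{M}^{val}_{m}\cdot\nabla_{\theta^{(t)}}\mathcal{L}_{SLGrad},
\]
which is exactly the expression of Theorem~\ref{taylortheo}. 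The sign of the left-hand side is therefore dictated entirely by the inner product on the right, and it suffices to prove that this inner product is non-negative.

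The core of the argument is evaluating that inner product. Because the weights $w_{ij}^{(t)}$ are computed first and then held fixed (detached) during the final backward pass, as specified in Algorithm~\ref{alg}, the gradient used in the update is $\nabla_{\theta^{(t)}}\mathcal{L}_{SLGrad}=\sum_{i,j}w_{ij}^{(t)}\,\nabla_{\theta^{(t)}}\mathcal{L}_{ij}$, so that
\[
\nabla_{\theta^{(t)}}\mathcal{M}^{val}_{m}\cdot\nabla_{\theta^{(t)}}\mathcal{L}_{SLGrad}=\sum_{i,j}w_{ij}^{(t)}\bigl(\nabla_{\theta^{(t)}}\mathcal{M}^{val}_{m}\cdot\nabla_{\theta^{(t)}}\mathcal{L}_{ij}\bigr)=\sum_{i,j}w_{ij}^{(t)}\,\Tilde{w}_{ij}^{(t)},
\]
where the last equality uses the definition of $\Tilde{w}_{ij}^{(t)}$ in equation (\ref{cossim}). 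I would then substitute the clamp-and-normalize rule (\ref{weightup}): writing $Z=\sum_{i,j}\max(\Tilde{w}_{ij}^{(t)},0)$, each summand becomes $\max(\Tilde{w}_{ij}^{(t)},0)\,\Tilde{w}_{ij}^{(t)}/Z$, which equals $\bigl(\max(\Tilde{w}_{ij}^{(t)},0)\bigr)^{2}/Z$ since $\Tilde{w}_{ij}^{(t)}$ contributes only where it is positive. Every term is thus non-negative, the inner product is $\geq 0$, and multiplying by $-\eta<0$ yields the claimed inequality (\ref{mono2}). Equality holds precisely when all $\Tilde{w}_{ij}^{(t)}\leq 0$, i.e. $Z=0$ and no update occurs, matching the optimality remark following the theorem.

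The main obstacle is upgrading the first-order step from an approximation to the \emph{exact} inequality the theorem asserts. I would close this gap by assuming $\mathcal{M}^{val}_{m}$ has an $L$-Lipschitz gradient and invoking the descent lemma,
\[
\mathcal{M}^{val}_{m}(\theta^{(t+1)})-\mathcal{M}^{val}_{m}(\theta^{(t)})\leq-\eta\,\bigl(\nabla_{\theta^{(t)}}\mathcal{M}^{val}_{m}\cdot\nabla_{\theta^{(t)}}\mathcal{L}_{SLGrad}\bigr)+\frac{L\eta^{2}}{2}\,\bigl\|\nabla_{\theta^{(t)}}\mathcal{L}_{SLGrad}\bigr\|^{2},
\]
and then imposing the standard small-step condition under which the positive quadratic remainder cannot overturn the non-negative inner product established above; the right-hand side is then non-positive for every $t$. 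Finally, I would dispose of the degenerate case $Z=0$, where the normalized weights (\ref{weightup}) are a $0/0$ form, by adopting the convention $w_{ij}^{(t)}=0$ throughout, so that the parameter update is trivial and (\ref{mono2}) holds with equality.
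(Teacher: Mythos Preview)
Your proposal is correct and follows essentially the same route as the paper: a first-order Taylor expansion of $\mathcal{M}^{val}_{m}$ about $\theta^{(t)}$ followed by the observation that $\nabla_{\theta^{(t)}}\mathcal{M}^{val}_{m}\cdot\nabla_{\theta^{(t)}}\mathcal{L}_{SLGrad}=\sum_{i,j}w_{ij}^{(t)}\tilde w_{ij}^{(t)}\ge 0$ thanks to the clamp-and-normalize rule. The paper packages this as two lemmas (the case where every $\tilde w_{ij}^{(t)}=0$ versus the case where at least one is nonzero) and works purely at the first-order $\approx$ level, without your descent-lemma remainder bound or your explicit treatment of the $Z=0$ degeneracy.
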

The proof of Theorem \ref{theo1} consists of two lemma's, which we will proof first.\\
\begin{lemma}
Consider $N_B$ as the batch size and $N_{T}$ as the number of tasks. If the assumptions made in Theorem \ref{theo1} hold and if $\nabla_{\theta^{(t)}}\mathcal{L}_{ij}(\theta^{(t)}).\nabla_{\theta^{(t)}}\mathcal{M}_{m}^{val}(\theta^{(t)})=0$ $\forall i \in \{0, N{_B}-1\}$ and $\forall j \in \{0, N_{T}-1\}$ then:
\begin{equation}
   \mathcal{M}_{m}^{val}(\theta^{(t+1)})\approx \mathcal{M}_{m}^{val}(\theta^{(t)})
\end{equation}
holds $\forall t$.
\label{lem1}
\end{lemma}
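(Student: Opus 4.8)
The plan is to derive the result from a first-order Taylor expansion of the meta-objective around the current parameter vector $\theta^{(t)}$, combined with the explicit form of the gradient-descent update. First I would write the SLGrad update in full: since $\theta^{(t+1)} = \theta^{(t)} - \eta \nabla_{\theta^{(t)}} \mathcal{L}_{SLGrad}$ and $\mathcal{L}_{SLGrad} = \sum_{i}\sum_{j} w_{ij}^{(t)} \mathcal{L}_{ij}$, linearity of the gradient gives $\nabla_{\theta^{(t)}} \mathcal{L}_{SLGrad} = \sum_{i}\sum_{j} w_{ij}^{(t)} \nabla_{\theta^{(t)}} \mathcal{L}_{ij}$.

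Next, treating the parameter increment $\theta^{(t+1)} - \theta^{(t)} = -\eta \nabla_{\theta^{(t)}}\mathcal{L}_{SLGrad}$ as small (justified for a sufficiently small learning rate $\eta$ and a differentiable $\mathcal{M}_m^{val}$), I would expand
\[
\mathcal{M}_m^{val}(\theta^{(t+1)}) = \mathcal{M}_m^{val}(\theta^{(t)}) + [\nabla_{\theta^{(t)}}\mathcal{M}_m^{val}]^{T}(\theta^{(t+1)} - \theta^{(t)}) + O(\eta^{2}).
\]
Substituting the update and discarding the quadratic remainder yields
\[
\mathcal{M}_m^{val}(\theta^{(t+1)}) - \mathcal{M}_m^{val}(\theta^{(t)}) \approx -\eta \sum_{i}\sum_{j} w_{ij}^{(t)} \, [\nabla_{\theta^{(t)}}\mathcal{M}_m^{val}]^{T} \nabla_{\theta^{(t)}}\mathcal{L}_{ij}.
\]
This is exactly the linearization that underpins Theorem \ref{taylortheo}, so the two results share one approximation regime.

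The decisive step is then to invoke the hypothesis of the lemma: each inner product $\nabla_{\theta^{(t)}}\mathcal{L}_{ij} \cdot \nabla_{\theta^{(t)}}\mathcal{M}_m^{val}$ vanishes, so every summand equals $w_{ij}^{(t)} \cdot 0 = 0$ and the entire right-hand side collapses to zero, giving $\mathcal{M}_m^{val}(\theta^{(t+1)}) \approx \mathcal{M}_m^{val}(\theta^{(t)})$. Crucially, this conclusion does not depend on the precise values of the weights, only on their being finite; I would remark on this because when all the dot products are zero the normalization in equation (\ref{weightup}) becomes the indeterminate $0/0$, and the algorithm then defaults to the uniform weights with which it is initialized. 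Any bounded choice leaves the argument intact, since each term is annihilated by the zero factor regardless.

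I expect the main obstacle to be interpretive rather than computational: making precise what ``$\approx$'' means. The equality is exact only to first order, with the neglected term of order $\eta^{2}\,\|\nabla_{\theta^{(t)}}\mathcal{L}_{SLGrad}\|^{2}$; I would therefore state explicitly the smoothness and small-step assumptions required for the remainder to be negligible. The degenerate weight-normalization case is the only genuine edge case, and it is dispatched by the finiteness remark above.
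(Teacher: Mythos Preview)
Your proposal is correct and follows essentially the same approach as the paper: a first-order Taylor expansion of $\mathcal{M}_m^{val}$ about $\theta^{(t)}$ combined with the gradient-descent update, after which the orthogonality hypothesis annihilates the first-order term. The only minor difference is where the zero factor is applied---the paper notes that under the hypothesis every unnormalized weight satisfies $\tilde{w}_{ij}=\max(0,0)=0$, so $\nabla_{\theta^{(t)}}\mathcal{L}_{SLGrad}=0$ outright, whereas you keep the weights generic and kill each summand via the vanishing inner product; your explicit remark on the $0/0$ normalization edge case is in fact more careful than the paper's own treatment.
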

\begin{proof}
As $\mathcal{M}^{val}_{m}$ is a differentiable, scalar-valued function, the first-order Taylor series expansion about a point $\theta^{(t)}$ is
\begin{align}
    \forall t: 
        &\mathcal{M}_{m}^{val}(\theta^{(t+1)}) \\&=\mathcal{M}_{m}^{val}(\theta^{(t)}-\eta  \nabla_{\theta^{(t)}}\mathcal{L}_{SLGrad}(\theta^{(t)}))\\
        &\approx \mathcal{M}_{m}^{val}(\theta^{(t)}) -\eta[\nabla_{\theta^{(t)}} \mathcal{M}_{m}^{val}(\theta^{(t)})]^{T}\nabla_{\theta^{(t)}}\mathcal{L}_{SLGrad}(\theta^{(t)}).
        \label{tay}
\end{align}
Using the SLGrad update rule (defined by equations \ref{SLGradloss} to \ref{weightup} of the main paper), we find that
    \begin{align}
        & \nabla_{\theta^{(t)}}\mathcal{L}_{SLGrad}(\theta^{(t)}) \\ &=\sum_{i}^{N_{B}}\sum_{j}^{N_{T}}\nabla_{\theta^{(t)}}w_{ij}\mathcal{L}_{ij}(\theta^{(t)})\\ &=
        \sum_{i}^{N_{B}}\sum_{j}^{N_{T}}\max \large[0,\nabla_{\theta^{(t)}}\mathcal{L}_{ij}(\theta^{(t)})\cdot\nabla_{\theta^{(t)}}\mathcal{M}_{m}\large]\nabla_{\theta^{(t)}}\mathcal{L}_{ij}(\theta^{(t)})
        \\&=0,
        \end{align}
where we used $\forall i,j: \nabla_{\theta^{(t)}}\mathcal{L}_{ij}(\theta^{(t)})\cdot\nabla_{\theta^{(t)}}\mathcal{M}_{m}=0$ in the last step. Inserting this into equation (\ref{tay}), we thus obtain
\begin{equation}
    \mathcal{M}_{m}^{val}(\theta^{(t+1)})-\mathcal{M}_{m}^{val}(\theta^{(t)}) \approx 0.
\end{equation}
\end{proof}
\begin{lemma}
Consider $N_B$ as the batch size and $N_{T}$ as the number of tasks. If the assumptions made in Theorem \ref{theo1} hold and if $\exists i \in \{0, N_{B}-1\}$ and $ j \in \{0, N_{T}-1\}$ such that
\begin{equation}
     \nabla_{\theta^{(t)}}\mathcal{L}_{ij}(\theta^{(t)})\cdot\nabla_{\theta^{(t)}}\mathcal{M}_{m}^{val}(\theta^{(t)})\neq 0,
\end{equation}
then: 
\begin{equation}
    \mathcal{M}_{m}^{val}(\theta^{(t+1)}) - \mathcal{M}_{m}^{val}(\theta^{(t)}) < 0.
\end{equation}
\label{lem2}
\end{lemma}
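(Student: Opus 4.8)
The plan is to mirror the Taylor-expansion argument of Lemma \ref{lem1}, but to track the sign of the first-order term rather than show it vanishes. First I would expand $\mathcal{M}_m^{val}$ to first order about $\theta^{(t)}$ along the gradient-descent update (equation \ref{gradd}), exactly as in equation (\ref{tay}), obtaining
\[
\mathcal{M}_m^{val}(\theta^{(t+1)}) - \mathcal{M}_m^{val}(\theta^{(t)}) \approx -\eta\,[\nabla_{\theta^{(t)}}\mathcal{M}_m^{val}]^{T}\nabla_{\theta^{(t)}}\mathcal{L}_{SLGrad}.
\]
Since $\eta>0$, it then suffices to prove that the inner product on the right-hand side is strictly positive.

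Next I would substitute the weighted gradient $\nabla_{\theta^{(t)}}\mathcal{L}_{SLGrad}=\sum_{i,j} w_{ij}\nabla_{\theta^{(t)}}\mathcal{L}_{ij}$ and exploit the defining identity $[\nabla_{\theta^{(t)}}\mathcal{M}_m^{val}]^{T}\nabla_{\theta^{(t)}}\mathcal{L}_{ij}=\tilde{w}_{ij}$ from equation (\ref{cossim}). This reduces the inner product to $\sum_{i,j} w_{ij}\tilde{w}_{ij}$. Inserting the clamped, normalized weights of equation (\ref{weightup}) and writing $Z=\sum_{i,j}\max(\tilde{w}_{ij},0)$, every term with $\tilde{w}_{ij}\le 0$ drops out because its numerator $\max(\tilde{w}_{ij},0)$ is zero, while every term with $\tilde{w}_{ij}>0$ contributes $\tilde{w}_{ij}^{2}/Z$. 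Hence
\[
[\nabla_{\theta^{(t)}}\mathcal{M}_m^{val}]^{T}\nabla_{\theta^{(t)}}\mathcal{L}_{SLGrad}=\frac{\sum_{i,j:\,\tilde{w}_{ij}>0}\tilde{w}_{ij}^{2}}{\sum_{i,j:\,\tilde{w}_{ij}>0}\tilde{w}_{ij}},
\]
a ratio of strictly positive quantities whenever at least one $\tilde{w}_{ij}$ is positive, and therefore strictly positive. Combining this with the expansion delivers the claimed strict decrease.

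The main obstacle is the edge case concealed in the hypothesis. The statement only assumes that some $\tilde{w}_{ij}\neq 0$, whereas the argument above needs at least one $\tilde{w}_{ij}>0$: if every nonzero dot product were negative, then $Z=0$ and the normalization in equation (\ref{weightup}) degenerates, the clamping annihilates all numerators, $\nabla_{\theta^{(t)}}\mathcal{L}_{SLGrad}=0$, and the first-order change is zero rather than negative. I would therefore either strengthen the hypothesis to ``$\exists\,(i,j)$ with $\tilde{w}_{ij}>0$'' or argue that, by the clamping convention, this is the only regime in which an update is actually taken. A secondary, standard caveat is that a strict inequality is being drawn from a first-order approximation; here I would observe that the first-order term is bounded away from zero, so for a sufficiently small learning rate $\eta$ it governs the sign of the exact difference, which is the implicit operating regime of this analysis.
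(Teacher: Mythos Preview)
Your approach is essentially identical to the paper's: both expand $\mathcal{M}_m^{val}$ to first order along the gradient-descent step, reduce the question to the sign of $\sum_{i,j} w_{ij}\,\nabla_{\theta^{(t)}}\mathcal{M}_m^{val}\cdot\nabla_{\theta^{(t)}}\mathcal{L}_{ij}$, drop the terms with $w_{ij}=0$, and observe that every surviving summand is a product of strictly positive factors. Your explicit closed form $\bigl(\sum_{\tilde w_{ij}>0}\tilde w_{ij}^{2}\bigr)\big/\bigl(\sum_{\tilde w_{ij}>0}\tilde w_{ij}\bigr)$ is a minor refinement that makes the positivity transparent, but the structure of the argument is the same.

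The edge case you flag---the hypothesis only guarantees some $\tilde w_{ij}\neq 0$, not some $\tilde w_{ij}>0$---is real and is \emph{not} addressed in the paper's proof either: the paper simply restricts the sum to indices with $w_{ij}\neq 0$ and asserts the remaining terms are positive, tacitly assuming this index set is nonempty. Your proposed fix (strengthen the hypothesis, or note that no update is taken otherwise) is the natural one. Likewise, your caveat about extracting a strict inequality from a first-order approximation goes beyond what the paper does; the paper treats the Taylor expansion as governing the sign without further comment.
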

\begin{proof}
Approximate $\mathcal{M}_{m}^{val}(\theta^{(t+1)})$ by its first-order Taylor series expansion
\begin{equation}
   \mathcal{M}_{m}^{val}(\theta^{(t+1)})\approx \mathcal{M}_{m}^{val}(\theta^{(t)}) - \eta [\nabla_{\theta^{(t)}} \mathcal{M}_{m}^{val}(\theta^{(t)})]^{T}\nabla_{\theta^{(t)}}\mathcal{L}_{SLGrad}(\theta^{(t)}).
\end{equation}
 We thus need to prove that
\begin{equation}
\begin{split}
    \eta [\nabla_{\theta^{(t)}} \mathcal{M}_{m}^{val}(\theta^{(t)})]^{T}\nabla_{\theta^{(t)}}\mathcal{L}_{SLGrad}(\theta^{(t)}) & > 0.
\end{split}  
\end{equation}
As $\eta > 0$ and we can ignore contributions for which $w_{ij}=0$, this reduces to proving that the following sum is positive:
\begin{equation}
\begin{split}
   & \sum_{i: w_{ij} \neq 0}^{N_{B}}\sum_{j: w_{ij} \neq 0}^{N_{T}}  w_{ij} \nabla_{\theta^{(t)}}[\mathcal{M}_{m}^{val}(\theta^{(t)})]^{T}  \nabla_{\theta^{(t)}}\mathcal{L}_{ij}(\theta^{(t)})\\ &
       = \sum_{i: w_{ij} \neq 0}^{N_{B}}\sum_{j: w_{ij} \neq 0}^{N_{T}}  w_{ij}\nabla_{\theta^{(t)}}\mathcal{M}_{m}^{val}(\theta^{(t)})\cdot\nabla_{\theta^{(t)}}\mathcal{L}_{ij}(\theta^{(t)}).
    \label{imp}
    \end{split}
\end{equation}
 Next, we used the fact that $[\nabla_{\theta^{(t)}}\mathcal{M}_{m}^{val}(\theta^{(t)})]^{T}$ and $\nabla_{\theta^{(t)}}\mathcal{L}_{ij}(\theta^{(t)})$ are row and column vectors respectively, such that
\begin{equation}
\begin{split}
     [\nabla_{\theta^{(t)}}\mathcal{M}_{m}^{val}(\theta^{(t)})]^{T}   \nabla_{\theta^{(t)}}\mathcal{L}_{ij}(\theta^{(t)})&\\ =\nabla_{\theta^{(t)}}\mathcal{M}_{m}^{val}(\theta^{(t)})\cdot\nabla_{\theta^{(t)}}\mathcal{L}_{ij}(\theta^{(t)}).
\end{split}
\end{equation}
All the remaining $w_{ij}$ are now positive and non-zero by definition. Therefore, using that $w_{ij}=0$ if  $\nabla_{\theta^{(t)}}\mathcal{M}_{m}^{val}(\theta^{(t)})\cdot\nabla_{\theta^{(t)}}\mathcal{L}_{ij}(\theta^{(t)})\leq 0$, we know that for all the terms occurring in the sum in equation (\ref{imp}):
\begin{equation}
    \nabla_{\theta^{(t)}}\mathcal{M}_{m}^{val}(\theta^{(t)})\cdot\nabla_{\theta^{(t)}}\mathcal{L}_{ij}(\theta^{(t)})>0,
\end{equation}
Consequently, 
\begin{equation}
    \sum_{i: w_{ij} \neq 0}^{N_{B}}\sum_{j: w_{ij} \neq 0}^{N_{T}} \eta w_{ij}\nabla_{\theta^{(t)}}\mathcal{M}_{m}^{val}(\theta^{(t)})\cdot\nabla_{\theta^{(t)}}\mathcal{L}_{ij}(\theta^{(t)})>0,
\end{equation}
such that 
\begin{equation}
    \mathcal{M}_{m}^{val}(\theta^{(t+1)}) < \mathcal{M}_{m}^{val}(\theta^{(t)}), 
\end{equation}
which proves the Lemma. 
\end{proof}
We prove Theorem \ref{theo1} by combining Lemma's \ref{lem1} and \ref{lem2}. \\ \\
If we further assume, the generalization metric $\mathcal{M}_{m}^{val}$ is bounded from below, the non-decreasing behavior implies that the metric convergence to a metric due to the monotone convergence theorem. Additionally, if $\mathcal{M}_{m}^{val}$ is convex, this minimum corresponds to the global minimum. 

\section*{A2. Proof of Theorem 2}
\renewcommand{\thetheorem}{2}
The proof of Theorem \ref{taylortheo} is given below. 
\begin{theorem}
Assume $\mathcal{M}_{m}$ is a differentiable function, representing the validation metric on the main task and $\mathcal{L}^{(t)}_{SLGrad}$ the total training loss (as defined in equation \ref{SLGradloss} in the main paper). Consider $\theta^{(t)}$ as the model parameter vector at time $t$. If the sample-level weights $w_{ij}$ are computed by the SLGrad update rule (defined by equations \ref{SLGradloss} to \ref{weightup}), and the model parameter vector $\theta^{(t)}$ is updated through gradient descent (equation \ref{gradd}), then the gain in the validation metric, can be approximated as
\label{taylortheo}
\begin{equation}
     \mathcal{M}_{m}^{val}(\theta^{(t+1)}) - \mathcal{M}_{m}^{val}(\theta^{(t)}) \approx - \eta[\nabla_{\theta^{(t)}}\mathcal{M}^{val}_{m}]^{T}\nabla_{\theta^{(t)}}\mathcal{L}_{SLGrad} ,
     \label{tayl}
\end{equation}
where $\eta$ is the learning rate. 
\end{theorem}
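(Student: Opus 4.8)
The plan is to obtain the statement as a direct consequence of a first-order Taylor expansion of $\mathcal{M}_{m}^{val}$ about the current parameter vector $\theta^{(t)}$, mirroring the expansion already carried out inside the proof of Lemma~\ref{lem1}. The essential observation is that the gradient descent step in equation~(\ref{gradd}) fixes the parameter increment to be $\theta^{(t+1)} - \theta^{(t)} = -\eta\,\nabla_{\theta^{(t)}}\mathcal{L}_{SLGrad}$, so the change in the validation metric is determined entirely by how $\mathcal{M}_{m}^{val}$ responds to a displacement along the direction $-\nabla_{\theta^{(t)}}\mathcal{L}_{SLGrad}$.

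First I would substitute the update rule into $\mathcal{M}_{m}^{val}(\theta^{(t+1)})$ to write
\begin{equation}
\mathcal{M}_{m}^{val}(\theta^{(t+1)}) = \mathcal{M}_{m}^{val}\!\left(\theta^{(t)} - \eta\,\nabla_{\theta^{(t)}}\mathcal{L}_{SLGrad}\right).
\end{equation}
Since $\mathcal{M}_{m}^{val}$ is assumed differentiable, its first-order Taylor expansion about $\theta^{(t)}$ gives
\begin{equation}
\mathcal{M}_{m}^{val}(\theta^{(t+1)}) \approx \mathcal{M}_{m}^{val}(\theta^{(t)}) - \eta\,[\nabla_{\theta^{(t)}}\mathcal{M}^{val}_{m}]^{T}\nabla_{\theta^{(t)}}\mathcal{L}_{SLGrad},
\end{equation}
and subtracting $\mathcal{M}_{m}^{val}(\theta^{(t)})$ from both sides yields exactly equation~(\ref{tayl}). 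Notice that this identity does not require the specific form of the weights in equations~(\ref{cossim}) and~(\ref{weightup}): those enter only afterwards, when one interprets the sign and magnitude of the right-hand side (as is done in Theorem~\ref{theo1}). For this reason the algebra is essentially a restatement of the expansion used for Lemma~\ref{lem1}, now kept in its general linear form rather than specialized to the orthogonality case.

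The main obstacle is not deriving the linear term but justifying the quality of the $\approx$. The neglected remainder is second order in the step, of magnitude $O(\eta^{2}\|\nabla_{\theta^{(t)}}\mathcal{L}_{SLGrad}\|^{2})$, so the approximation is only controlled when the learning rate is small relative to the curvature of $\mathcal{M}_{m}^{val}$ along the update direction. I would make this precise by strengthening the hypothesis to $\mathcal{M}_{m}^{val}$ being twice differentiable (or equivalently having an $L$-Lipschitz gradient) and bounding the remainder through the mean-value form of Taylor's theorem, which gives an explicit error of order $\tfrac{L\eta^{2}}{2}\|\nabla_{\theta^{(t)}}\mathcal{L}_{SLGrad}\|^{2}$. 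This is the step that converts the heuristic $\approx$ into a rigorous statement with a quantified error bound, and it is also precisely the look-ahead approximation whose practical validity the paper defers to the experimental justification in the supplemental material.
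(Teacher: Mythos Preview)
Your proposal is correct and follows essentially the same approach as the paper: substitute the gradient-descent update into $\mathcal{M}_{m}^{val}(\theta^{(t+1)})$, apply the first-order Taylor expansion about $\theta^{(t)}$, and subtract. The paper's own proof stops at the linear term without any discussion of the remainder, so your added paragraph on bounding the $O(\eta^2)$ error via Lipschitz smoothness goes slightly beyond what the authors provide.
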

\begin{proof}
$\forall t \in [0, \#trainingsteps]$ we can expand the main task metric $\mathcal{M}_{m}(\theta^{(t+1)}$) as
\begin{align}
   \mathcal{M}_{m}^{val}(\theta^{(t+1)})&\approx \mathcal{M}_{m}^{val}(\theta^{(t)}-\eta \nabla_{\theta^{(t)}}\mathcal{L}_{SLGrad}) \\ & \approx
   \mathcal{M}_{m}^{val}(\theta^{(t)}) - \eta [\nabla_{\theta^{(t)}} \mathcal{M}_{m}^{val}(\theta^{(t)})]^{T}\nabla_{\theta^{(t)}}\mathcal{L}(\theta^{(t)}),
\end{align}
where we used the first-order Taylor approximation of a differentiable, scalar-valued function about a point $\theta^{(t)}$.
Therefore, the gain in the validation metric, resulting from the parameter update  $\theta^{(t)} \to \theta^{(t+1)}$,  can be approximated as
\begin{align}
    \mathcal{M}_{m}^{val}(\theta^{(t+1)}) -  \mathcal{M}_{m}^{val}(\theta^{(t)}) &  \approx - \eta [\nabla_{\theta^{(t)}}\mathcal{M}_{m}]^{T}\nabla_{\theta^{(t)}}\mathcal{L}_{SLGrad}.
    \label{Taylorapp}
\end{align}
\end{proof}
Experimental justification for the use of the Taylor approximation is provided in section C1. 
\newpage
\section*{Supplement B: Implementation details \label{exp}}
The implementation details comprising the data, the baseline neural network and the hyperparameter configuration are discussed for all experimental setups. Accordingly, this section is organized as follows; the simulation study is discussed in subsection B1, CIFAR-10 in subsection B2, and Multi-MNIST in subsection B3.
Pytorch was used for the implementation of all the experiments. For the implementation of CAGrad and PCGrad, we adapted code from LibMTL \cite{LibMTL}. Note that we logged all the metrics, losses, and hyperparameter configurations using Weights $\&$ Biases \cite{wandb}. The corresponding projects will be made available.
\section*{B1. Simulation Study \label{sim}}
This subsection contains detailed information about the simulation study presented in section 4 of the paper. Concretely, we elaborate on how the toy regression tasks were generated in subsection B1.1. The neural network used as a backbone for the experiments and the corresponding hyperparameter configurations are presented in subsections B1.2 and B1.3 respectively.
\subsection*{B1.1 Data \label{simdat}}
The toy multi-task regression set-up is generated from the following function:
\begin{equation}
    y_{out}=f_{i}(x_{in})= \sigma_{i}\tanh [(\textbf{B}+\epsilon_{i})\textbf{x}],
    \label{basis}
\end{equation}
where $x_{in}$ is the ten-dimensional input vector. As such, we followed the approach of \cite{c2} and \cite{hyda}. The validation and test sets contain 200 samples while 1000 samples are used to train the network. We generated one auxiliary task with the same function as the main task (defined by equation (\ref{basis})) and made the $\epsilon_{i}$ task-dependent.  The common basis $\textbf{B}$ and task-dependent $\epsilon_{i}$ represent constant matrices generated IDD from $\mathcal{N}(0,\sqrt{1})$ and $\mathcal{N}(0,\sqrt{3.5})$ respectively. As the task distributions have a common basis, it is expected that the auxiliary task to be generally helpful to the main task \cite{c2}. In order to test the robustness of the proposed algorithm, different amounts (0, 40, and 70 percent) of Gaussian noise were added to the regression task in equation (\ref{basis}):
\begin{equation}
    y_{out}=y_{out}+\mathcal{N}(0,\sqrt{2})
\end{equation}
Note that the validation set remains unaltered. By labeling each observation as (non)-noisy, we kept track of how weights are distributed over the observations. As such, weight distributions as presented on Figure 2 in the main paper were created. 
\subsection*{B1.2 Model Architecture \label{NNSim}}
The same model architecture is used for all the different toy setups and baseline algorithms: a standard multi-task neural network with two to four shared layers (64 neurons each) and one to four task-specific layers (32 neurons each) for the two tasks. 
\subsection*{B1.3 Hyperparameters \label{Hypsim}}
A grid-search was performed to optimize the hyper-parameters (learning rate (LR), batch size (BS), number of shared (SL) and task-specific (TL) layers ). For completeness, the grid-search was performed independently for each algorithm. The corresponding optimal hyperparameter configurations for each algorithm are presented in Table \ref{toyhyp} below. The parameters are optimized for one and the same initialization after which the optimal configuration is applied to the additional initializations. The values from Table 2 in the main paper were obtained by averaging the best performance for three different initializations (random seeds) for each algorithm.  
\begin{table}[h!]
\centering
\begin{tabular}{l l l l l }
\\ \textbf{ALGORITHM} & \textbf{LR} & \textbf{BS} & \textbf{SL}&  \textbf{TL} \\ \hline \\
 OL-AUX & 0.1 &  64 & 2  & 2\\ 
 PC Grad & 0.1 & 32 & 3& 3\\ 
 CA Grad & 0.1 & 64 & 2 & 2 \\ 
 Gcosim & 0.01  & 64 &  3 & 4 \\ 
 Static & 0.01 & 32 & 4 & 4\\ 
 GradNorm &0.1 & 32 & 2& 2\\
 Random & 0.01 & 64 & 3 & 4\\  
SLGrad (ours) & 0.1 & 32 & 3 & 4 \\ 
\end{tabular}
\caption{Optimal hyperparameters for toy dataset}
\label{toyhyp}
\end{table}
\section*{B2. CIFAR-10 \label{cif}} 
Implementation details corresponding to the experiments performed on the CIFAR-10 dataset are discussed in this subsection. First, in section B2.1, the standard dataset and its transformations are presented. Then, the neural network used as a backbone for the experiments and the corresponding hyperparameter configurations are provided in subsections B2.2 and B2.3 respectively.
\subsection*{B2.1 Data}
\indent CIFAR-10 \cite{cifar10} is a benchmark dataset containing 60000 color images (50000 train and 10000 test images) of size 32x32 that each belong to one of ten classes. For the experiments 20000 images were subsampled from the original training dataset for training. Additionally, the multi-label classification task was transformed into 10 binary classification tasks. One of these classification tasks (i.e., predicting if an airplane appears on an image or not) was chosen to become the main task of interest and the other nine were used as auxiliary tasks. 
Next to the standard \say{clean} CIFAR-10 multi-task setup, we also generated two semi-synthetic setups, following \cite{c35}:
\begin{itemize}
    \item \textit{Uniform Label Flips}: samples from all classes are assigned a different label with 40\% and 70\% probability, respectively. 
    \item \textit{Background Label Flips}: all flipped labels (20 percent) are flipped to the same class, i.e., the background class. This type of flip also induces class imbalance as the background class will be likely to dominate. 
\end{itemize}
Note that, in creating the semi-synthetic setups the validation data set (20 percent of the training set) is clean for all algorithms. %eventueel experimenten met noisy validation set in laatste hoofdstuk.
\subsection*{B2.2 Model Architecture}
We employ the LeNet-5 architecture \cite{cLENET} as the backbone for the experiments on CIFAR-10. All the layers were used as a shared encoder (except the fully connected layer). On top of this shared encoder, one to four fully connected layers were added as task-specific functions with ReLU activation functions. The Binary Cross Entropy with a sigmoid is used as the task-specific loss function for all tasks.
\subsection*{B2.3 Hyperparameters}
For completeness, an independent grid-search (learning rate (LR $\in \{ 0.1, 0.01, 0.001\}$), number of task-specific layers (TL: 1 to 4) and batch size (BS $\in \{32, 64, 128 \}$) was performed for each dynamic weighting algorithm separately. The optimal hyperparameters for each algorithm are presented in Table 2 below. Note that the optimal hyperparameters for one initialization are used on three additional initializations. The values in Table 3 in the main paper are obtained by averaging the test performance over these three different initializations (random seeds) and flipped label experiments.
\begin{table}[h!]
\centering
\begin{tabular}{l l l l l }
\\ \textbf{ALGORITHM} & \textbf{LR} & \textbf{BS} & \textbf{TL} \\ \hline \\
 OL-AUX & 0.001 & 64  & 2  &\\ 
 PC Grad & 0.01 & 64  & 2 &\\ 
 CA Grad & 0.01 & 64 & 2 &\\ 
 GCosim &  0.001 & 128 & 3  & \\ 
 GradNorm & 0.1 & 128 & 1 & \\
 Static & 0.1  & 32 & 3 &\\ 
 Random & 0.1 & 32  & 2 &\\  
SLGrad (ours) & 0.1  & 128  & 2 \\ 
\end{tabular}
\caption{Hyperparameter optimization on CIFAR-10 clean dataset.}
\label{cifoptim}
\end{table}
\newpage
\section*{B3. Multi-MNIST \label{mnist}}
This subsection contains detailed information about the experiments performed on the Multi-MNIST dataset. To start with, subsection B3.1 provides more information about the data. Thereafter, the neural network and its corresponding hyperparameter configurations are discussed in subsections B3.2 and B3.3 respectively.
\subsection*{B3.1 Data} Multi-MNIST \cite{c1000} is a MTL version of the original MNIST dataset \cite{cMNIST} generated by \cite{c1000} and commonly used for benchmarking in the MTL literature. \cite{c1000} generated the data by overlaying digits from the MNIST dataset on top of other digits of the same set (training or test) but from another class. As the digits are shifted up to 4 pixels in each direction, the dataset contains 36x36 pixel images. We use this dataset in the MTL setup by defining the classification of the left and right digits on each image as different tasks. The dataset contains 60k training and 20k test images but only 20k training and 5k test samples are used. The first task (identifying the left digit) is identified as the main task for the experiments while the second task (identifying the right digit) is used as an auxiliary task. \\
\subsection*{B3.2 Model Architecture}
The same adaptation of the LeNet-5 architecture as for the experiments on CIFAR-10 is used. Note that the binary cross entropy with the sigmoid function was replaced by the cross entropy function with softmax. For the implementation, we adapted code from \cite{cmultiop}.
\subsection*{B3.3 Hyperparameters}
To optimize each algorithm, a grid-search over the hyperparameters: learning rate (LR $\in \{0.1, 0.01, 0.001, 0.0001 \}$) and batch size (BS $\in  \{32, 64, 128 \}$) was performed for each algorithm separately. The corresponding optimal hyperparameters for one initialization (random seed) are presented in Table \ref{MNIST voor}. The results in Table 4 in the paper are obtained by averaging the test performance for three different initializations.
\begin{table}[h!]
\centering
\begin{tabular}{l l l }
\\ \textbf{ALGORITHM} & \textbf{LR} & \textbf{BS}  \\ \hline \\
 OL-AUX & 0.001 &  128   \\ 
 PC Grad &  0.1 & 128 \\ 
 CA Grad & 0.001 & 32  \\ 
 Gcosim &  0.001&  128 \\ 
 Static & 0.001 & 128  \\ 
 GradNorm & 0.0001 & 128  \\
 Random & 0.001 & 32 \\  
SLGrad (ours) &  0.001 & 128 \\ 
\end{tabular}
\caption{Hyperparameter optimization on the Multi-MNIST dataset.}
\label{MNIST voor}
\end{table}
\newpage
\section*{Supplement C: Additional Results}
In this section, additional experimental results are presented. In subsection C1, the use of the Taylor approximation (equation \ref{tay}) is justified by experimental observations. Next, additional plots supporting the results presented in sections 4 and 5 of the paper are provided.
\subsection*{C1. Taylor Approximation}
We compare the learning curve of SLGrad when the algorithm is applied with and without the use of Taylor approximation (\ref{Taylorapp}).  Applying the approximation consists of not explicitly computing the look-ahead update (as discussed in section 3.2 of the main paper and in \cite{c31,TAG}) to obtain $\mathcal{M}(\theta_{t+1})$, but instead using only $\mathcal{M}(\theta_{t})$. In essence, this results in the removal of one forward and parameter update on each training pass. Figure \ref{toytay} provides the comparison of the learning curves on the experiments with and without approximation. We see that the learning curves remain seemingly unaltered for the toy regression task (left panel) and similar for CIFAR-10 (right panel) where a small improvement is still observed for the explicit computation. 
\begin{figure}[h!]
    \centering
    \includegraphics[scale=0.7]{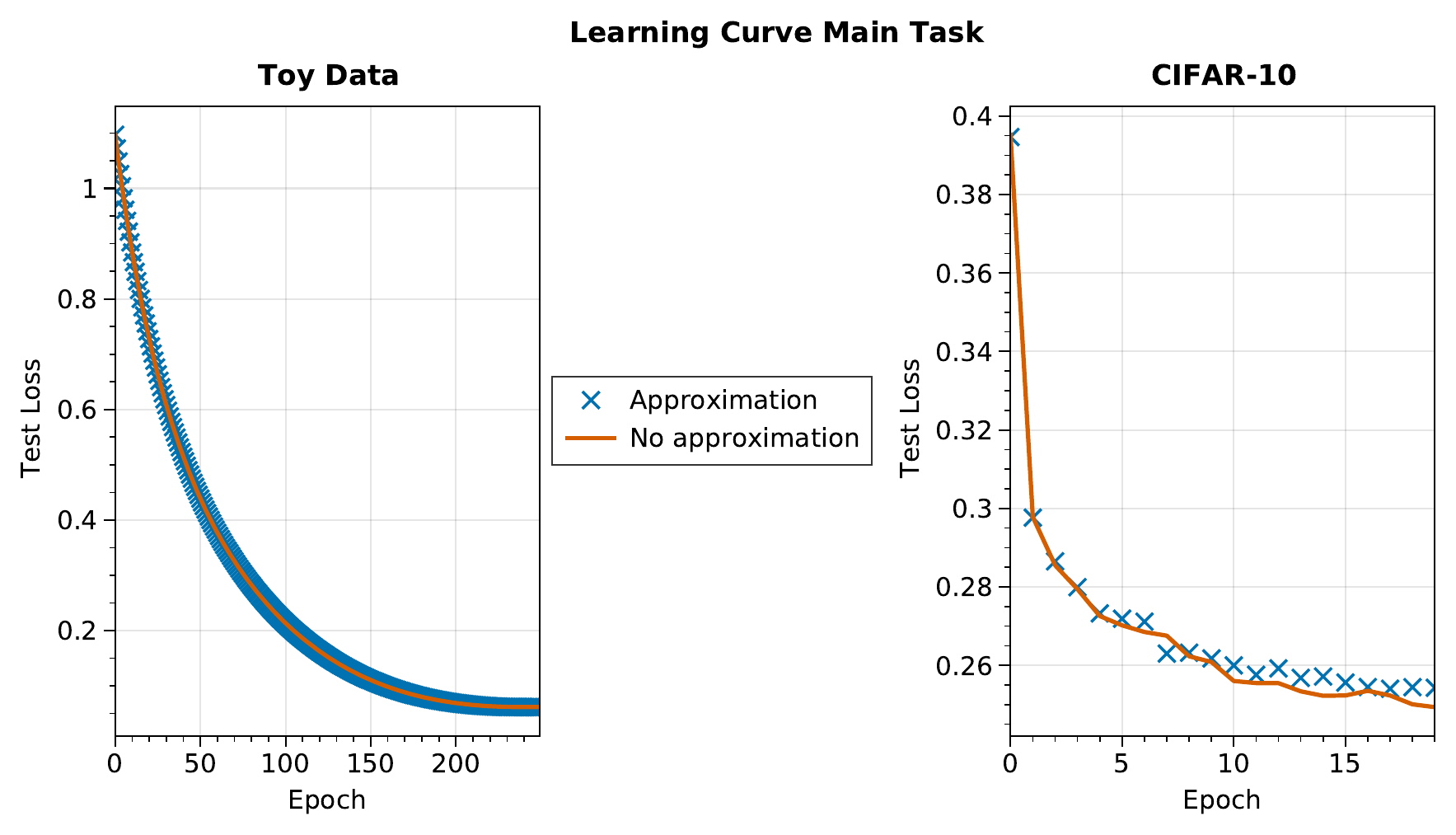}
    \caption{The comparison between the learning curve of SLGrad with and without the use of the Taylor approximation for the Toy simulation (A) and CIFAR-10 experiments (B).}
    \label{toytay}
\end{figure}
\subsection*{C2. Additional Experimental Results}
Additional experimental support for the results discussed in sections 4 and 5 of the main paper is presented in this section. Subsection C2.1 contains additional plots to support the weight distributions provided in section 4 of the paper. Next, subsection C2.2 contain additional plots supporting section 5 in the main paper.
\newpage
\subsubsection*{C2.1 Sample Weighting}
Figure \ref{totweight_aux} shows the evolution of the total task weight of the main and auxiliary tasks for the experiments on the clean toy data. The total task weights are obtained by summing over the sample level weights for each task distribution separately. At the beginning of the learning procedure, most of the weight is given to the main task, after which it decreases at the end of the learning procedure. 
\begin{figure}[h!]
    \centering
    \includegraphics[scale=0.9]{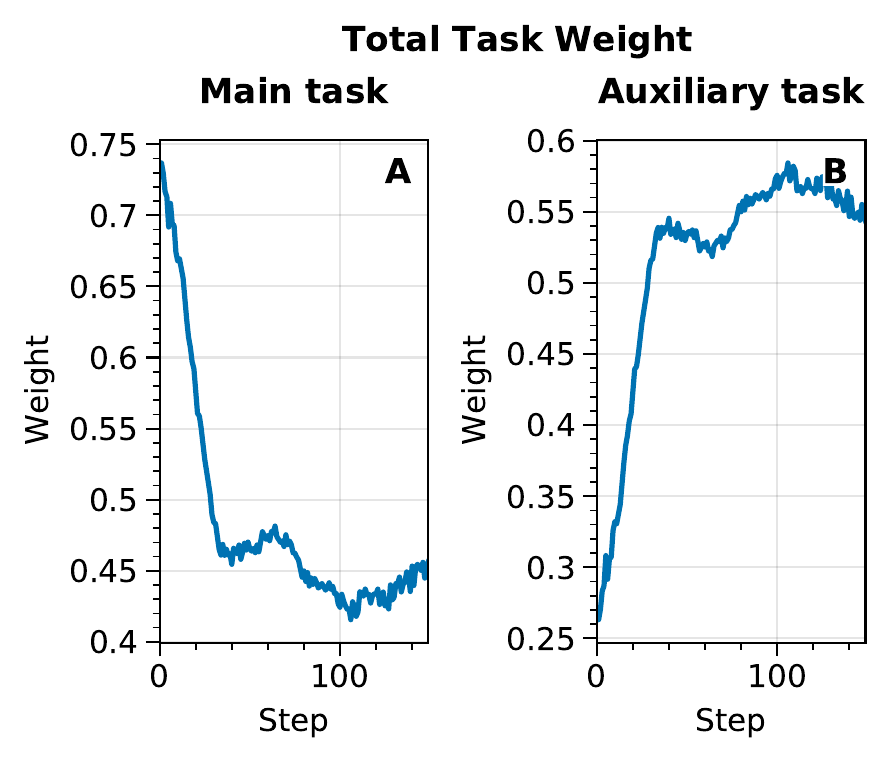}
    \caption{Evolution of the total task weight of the main task (A) and auxiliary task (B) for a clean toy setup. }
    \label{totweight_aux}
\end{figure}
\subsubsection*{C2.2 Task Weighting}
The total task weight of the main task assigned by SLGrad when applied on Cifar-10 for different amounts of flipped labels (0, 40, 70) is shown in Figure \ref{totweigl}. This plot contains the data used to generate Figure 3 in the main paper. As discussed in section 5 of the main paper, SLGrad decreases the weight assigned to the main task as the amount of flipped labels in the main task distribution increases. 
\begin{figure}[h!]
    \centering
    \includegraphics[scale=0.9]{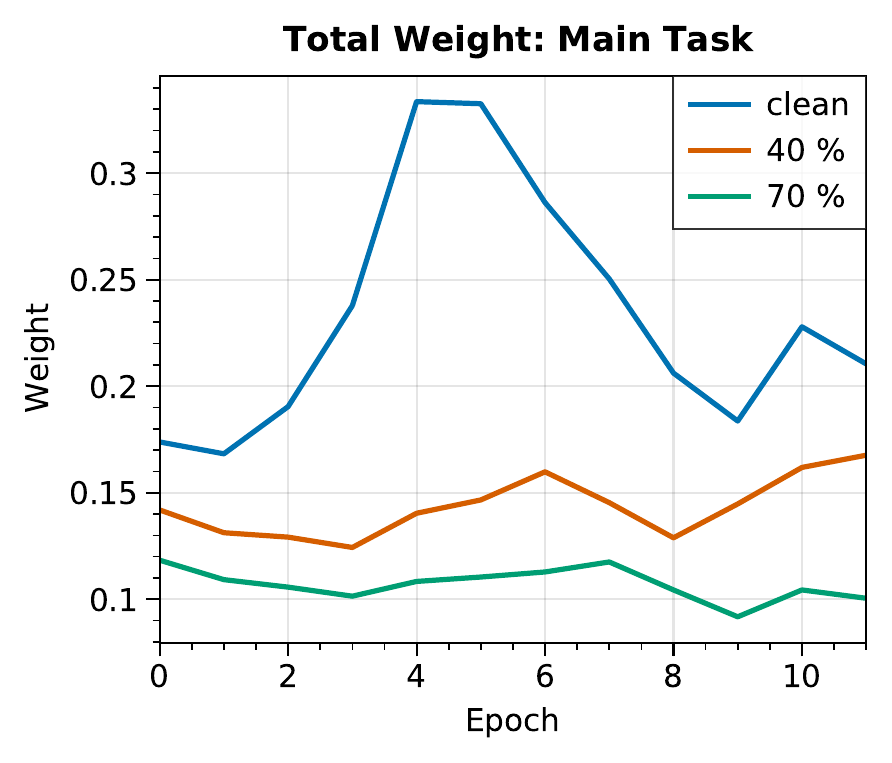}
    \caption{Comparison of total task weight of the main task for SLGrad applied on CIFAR-10 for different amounts of flipped labels (0, 40, 70 percent). This plot was used to generate Figure 3 in the paper. The hyperparameters used for the three experiments are equal.}
    \label{totweigl}
\end{figure}

\end{document}